\newcommand{\Prob}[1]{\mathbb P\left(#1\right)}
\newcommand{\E}{\mathbb E}
\newcommand{\R}{\mathbb{R}}
\DeclareMathOperator{\poly}{poly}
\newcommand{\algoname}{\textsc{Confident MC-LSPI}}
\newcommand{\rolloutname}{\textsc{ConfidentRollout}}
\newcommand{\init}{\text{init}}
\numberwithin{remark}{section}
\newtheorem{lemma}{Lemma}
\numberwithin{lemma}{section}
\begin{document}

\title{ \bf Efficient Local Planning with Linear Function Approximation}

\author[1]{Dong Yin}
\author[1]{Botao Hao}
\author[1]{Yasin Abbasi-Yadkori}
\author[1]{Nevena Lazi\'{c}}
\author[1,2]{Csaba Szepesv\'{a}ri}
\affil[1]{DeepMind \thanks{Emails: \{dongyin, bhao, yadkori, nevena, szepi\}@google.com}}
\affil[2]{University of Alberta}

\date{\today}

\maketitle
\begin{abstract}
We study query and computationally efficient planning algorithms for discounted Markov decision processes (MDPs) with linear function approximation and a simulator.
We assume that the agent has \emph{local access} to the simulator, meaning that the simulator can be queried only for states that have been encountered in previous simulation steps.
This is a more practical setting than the so-called \emph{random-access} (or, generative) setting, where the agent has a complete description of the state space and features and is allowed to query the simulator at any state of its choice.
We propose two new algorithms for this setting, which we call \emph{confident Monte Carlo least-squares policy iteration} (\textsc{Confident MC-LSPI}), and \emph{confident Monte Carlo Politex} (\textsc{Confident MC-Politex}), respectively. 
The main novelty in our algorithms is that it gradually builds a set of state-action pairs (``core set'') with which it can control the extrapolation errors. 
Under the assumption that the action-value functions of all policies 
are linearly realizable with given features, 
we show that our algorithm has polynomial query and computational cost
in the dimension of the features, the effective planning horizon and the targeted sub-optimality, while the cost remains independent of the size of the state space. Our result strengthens previous works by broadening their scope, either by weakening the assumption made on the power of the function approximator, or by weakening the requirement on the simulator and removing the need for being given an appropriate core set of states.
An interesting technical contribution of our work is the introduction of a novel proof technique that makes use of a \emph{virtual policy iteration} algorithm.
We use this method to leverage existing results on $\ell_\infty$-bounded approximate policy iteration to show that our algorithm can learn the optimal policy for the given initial state even only with local access to the simulator. We believe that this technique can be extended to broader settings beyond this work.
\end{abstract}

\section{Introduction}\label{sec:intro}

Efficient planning lies at the heart of modern reinforcement learning (RL). In the simulation-based RL, the agent has access to a simulator 
which it uses to query a state-action pair to obtain the reward of the queried pair and the next state. 
When planning with large state spaces in the presence of features, the agent can also compute the feature vector associated with a state or a state-action pair.
Planning efficiency is measured in two ways: using \emph{query cost}, the number of calls to the simulator, and using \emph{computation cost}, the total number of logical and arithmetic operations that the agent uses. 
In Markov decision processes (MDPs) with a large state space, we call a planning algorithm \emph{query-efficient} (\emph{computationally-efficient}) if its query (respectively, computational) cost is independent of the size of the state space and polynomial in other parameters of the problem such as the dimension of the feature space, the effective planning horizon, the number of actions and the targeted sub-optimality.

Prior works on planning in MDPs often assume that the agent has access to a \emph{generative model} which allows the agent to query the simulator with any \emph{arbitrary} state-action pair~\citep{kakade2003sample, sidford2018near, yang2019sample, lattimore2020learning}. In what follows, we will call this the \emph{random access} model. The random access model is often difficult to support.
To illustrate this, consider a problem where the goal is to  move the joints of a robot arm so that it moves objects around.
The simulation state in this scenario is then completely described by the position, orientation and associated velocities of the various rigid objects involved.
To access a state, a planner can then try to choose some values for each of the variables involved.
Unfortunately, given only box constraints on the variable values (as is typically the case), a generic planner will often choose value combinations that are invalid based on physics, for example with objects penetrating each other in space. 
This problem is not specific to robotic applications but also arises in MDPs corresponding to combinatorial search, just to mention a second example. 

To address this challenge, we replace the random access model with a \emph{local access} model, where the only states at which the agent can query the simulator are the initial states provided to the agent, or states returned in response to previously issued queries.
This access model can be implemented with any simulator that
supports resetting its internal state to a previously stored such state. This type of checkpointing is widely supported, and if a simulator does not support it, there are general techniques that can be applied to achieve this functionality.
As such, this access model significantly expands the scope of planners.

\begin{definition}[Local access to the simulator]\label{def:local_access}
We say the agent has local access to the simulator if the agent is allowed to query the simulator with a state that the agent has previously seen paired with an arbitrary action.
\end{definition}

Our work relies on linear function approximation. Very recently, \cite{weisz2021exponential} showed that linear realizability assumption of the optimal state-action value function ($Q^*$-realizability) alone is not sufficient to develop a query-efficient planner. In this paper, we assume linear realizability of all policies ($Q_{\pi}$-realizability). We discuss several drawbacks of previous works \citep{lattimore2020learning, du2020good} under the same realizability assumption. First, these works require the knowledge of the features of all state-action pairs; otherwise, the agent has to spend $\cO(|\cS||\cA|)$ query cost to extract the features of all possible state-action pairs, where $|\cS|$ and $|\cA|$ are the sizes of the state space and action space, respectively. Second, these algorithms require the computation of either an approximation of the global optimal design \citep{lattimore2020learning} or a barycentric spanner \citep{du2020good} of the matrix of all features. Although there exists algorithms to approximate the optimal design~\citep{todd2016minimum} or barycentric spanner~\citep{awerbuch2008online}, the computational complexities for these algorithms are polynomial in the total number of all possible feature vectors, i.e., $|\cS||\cA|$, which is impractical for large MDPs.

We summarize our contributions as follows:
\begin{itemize}
    \item With local access to the simulator, we propose two policy optimization algorithms---\emph{confident Monte Carlo least-squares policy iteration} (\algoname), and its regularized (see e.g.~\citet{even2009online,abbasi2019politex}) version \emph{confident Monte Carlo Politex} (\textsc{Confident MC-Politex}). Both of our algorithms maintain a \emph{core set} of state-action pairs and run Monte Carlo rollouts from these pairs using the simulator. The algorithms then use the rollout results to estimate the Q-function values and then apply policy improvement. During each rollout procedure, whenever the algorithm observes a state-action pair that it is less confident about (with large uncertainty), the algorithm adds this pair to the core set and restarts. Compared to several prior works that use additive bonus~\citep{jin2020provably,cai2020provably}, our algorithm design demonstrates that in the local access setting, core-set-based exploration is an effective approach.
\vspace{-1mm}
    \item Under the $Q_{\pi}$-realizability assumption, 
    we prove that both \algoname~and \textsc{Confident MC-Politex} can learn a $\kappa$-optimal policy with query cost of $\poly(d, \frac{1}{1-\gamma}, \frac{1}{\kappa}, \log(\frac{1}{\delta}), \log(b))$ and computational costs of $\poly(d, \frac{1}{1-\gamma}, \frac{1}{\kappa}, |\cA|, \log(\frac{1}{\delta}), \log(b))$, where $d$ is the dimension of the feature of state-action pairs, $\gamma$ is the discount factor, $\delta$ is the error probability, and $b$ is the bound on the $\ell_2$ norm of the linear coefficients for the Q-functions. In the presence of a model misspecification error $\epsilon$, we show that \algoname~achieves a final sub-optimality of $\tilde{\cO}(\frac{\epsilon\sqrt{d}}{(1-\gamma)^2})$, whereas \textsc{Confident MC-Politex} can improve the sub-optimality to $\tilde{\cO}(\frac{\epsilon\sqrt{d}}{1-\gamma})$ with a higher query cost. 
   \vspace{-1mm}
    \item We develop a novel proof technique that makes use of a \emph{virtual policy iteration} algorithm. We use this method to leverage existing results on approximate policy iteration which assumes that in each iteration, the approximation of the Q-function has a bounded $\ell_\infty$ error~\citep{munos2003error,farahmand2010error} (see Section \ref{sec:guarantee} for details).
\end{itemize}

\section{Related work}\label{sec:related_work}
Simulators or generative models have been considered in early studies of reinforcement learning~\citep{kearns1999finite, kakade2003sample}. Recently, it has been shown empirically that in the local access setting, core-set-based exploration has strong performance in hard-exploration problems~\citep{ecoffet2019go}. In this section, we mostly focus on related theoretical works. We distinguish among \emph{random access}, \emph{local access}, and \emph{online access}.
\begin{itemize}
    \item Random access means that the agent is given a list of all possible state action pairs and can query any of them to get the reward and a sample of the next state.
    \item Local access means that the agent can access previously encountered states, which can be implemented with checkpointing. The local access model that we consider in this paper is a more practical version of planning with a simulator.
    \item Online access means that the simulation state can only be reset to the initial state (or distribution) or moved to a next random state given an action. The online access setting is more restrictive compared to local access, since the agent can only follow the MDP dynamics during the learning process.
\end{itemize}
We also distinguish between \emph{offline} and \emph{online} planning. In the offline planning problem, the agent only has access to the simulator during the training phase, and once the training is finished, the agent outputs a policy and executes the policy in the environment without access to a simulator. This is the setting that we consider in this paper. On the other hand, in the online planning problem, the agent can use the simulator during both the training and inference phases, meaning that the agent can use the simulator to choose the action when executing the policy.
Usually, online RL algorithms with sublinear regret can be converted to an offline planning algorithm under the online access model with standard online-to-batch conversion~\citep{cesa2004generalization}.
While most of the prior works that we discuss in this section are for the offline planning problem, the \textsc{TensorPlan} algorithm~\citep{weisz2021query} considers online planning. 

In terms of notation, some works considers finite-horizon MDPs, in which case we use $H$ to denote the episode length (similar to the effective planning horizon $(1-\gamma)^{-1}$ in infinite-horizon discounted MDPs). 
Our discussion mainly focuses on the results with linear function approximation. We summarize some of the recent advances on efficient planning in large MDPs in Table \ref{table:comparsion}.

\begin{table}[ht]
\centering
\vspace{-3mm}
\caption{Recent advances on RL algorithms with linear function approximation under different assumptions. Positive results mean query cost depends only polynomially on the relative parameter while negative results refer an exponential lower bound on the query complexity. CE stands for computational efficiency and ``no'' for CE means no computational efficient algorithm is provided. \\ $\dag$: The algorithms in these works are not query or computationally efficient unless the agent is provided with an approximate optimal design~\citep{lattimore2020learning} or barycentric spanner~\citep{du2020good} or ``core states''~\citep{shariff2020efficient} for free. \\ $\ddag$:~\citet{weisz2021query} consider the online planning problem whereas other works in this table consider (or can be converted to) the offline planning problem.
}\label{table:comparsion}
\scalebox{0.88}{
\begin{tabular}{ cccc } 
 \toprule
 Positive Results & Assumption & CE & Access Model \\ 
 \midrule
 \cite{yang2019sample}&linear MDP & yes & random access\\ 
 \hline
 \cite{lattimore2020learning, du2020good} & $Q_{\pi}$-realizability & no $\dagger$  & random access \\ 
 \hline
 \citet{shariff2020efficient} &  $V^*$-realizability & no $\dagger$  & random access \\ 
 \hline
 \textbf{This work}&$Q_{\pi}$\textbf{-realizability} & \textbf{yes}  & \textbf{local access}\\
 \hline
\cite{weisz2021query} $\ddag$& $V^*$-realizability, $\cO(1)$ actions & no  & local access\\ 
 \hline
 \cite{li2021sample}& $Q^{*}$-realizability, constant gap & yes  & local access\\ 
\hline
\cite{jiang2017contextual} &  low Bellman rank & no  & online access \\ 
\hline
\cite{zanette2020learning} & low inherent Bellman error & no  & online access \\ 
\hline
    \cite{du2021bilinear} & bilinear class & no & online access \\
  \hline
\cite{lazic2021improved,wei2021learning} & $Q_\pi$-realizability, feature excitation & yes  & online access \\ 
\hline
\cite{jin2020provably,agarwal2020pc} & linear MDP & yes  & online access \\ 
 \hline
  \cite{zhou2020provably,cai2020provably} & linear mixture MDP & ? & online access \\
  \hline
 \midrule
  Negative Results & Assumption & CE & Access Model \\ 
 \midrule
 \cite{du2020good} & $Q_\pi$-realizability, $\epsilon=\Omega(\sqrt{H/d})$ & N/A & random access
 \\
 \hline
 \cite{weisz2021exponential}&$Q^*$-realizability, $\exp(d)$ actions & N/A & random access\\
 \hline
  \cite{wang2021exponential}&$Q^*$-realizability, constant gap & N/A & online access \\ 
 \bottomrule
\end{tabular}}
\vspace{-3mm}
\end{table}

\paragraph{Random access}
Theoretical guarantees for the random access model have been obtained for the tabular setting~\citep{sidford2018near,agarwal2020model,li2020breaking,azar2013minimax}.
As for linear function approximation, different assumptions have been made for theoretical analysis.
Under the linear MDP assumption,
\cite{yang2019sample} derived an optimal $\cO(d\kappa^{-2}(1-\gamma)^{-3})$ query complexity bound by a variance-reduction Q-learning type algorithm.
Under the $Q_\pi$-realizability of all determinstic policies (a strictly weaker assumption than linear MDP~\citep{zanette2020learning}), \citet{du2020good} showed a negative result for the settings with model misspecification error $\epsilon=\Omega(\sqrt{H/d})$ (see also~\citet{van2019comments,lattimore2020learning}). When $\epsilon = o((1-\gamma)^2/\sqrt{d})$, assuming the access to the full feature matrix, \citet{lattimore2020learning} proposed algorithms with polynomial query costs, and \citet{du2020good} proposed similar algorithm for the exact $Q_\pi$ realizability setting. Since these works need to find a globally optimal design or barycentric spanner, their computational costs depend polynomially on the size of the state space. Under the $V^*$-realizability assumption (i.e., the optimal value function is linear in some feature map), \citet{shariff2020efficient} proposed a planning algorithm assuming the availability of a set of core states but obtaining such core states can still be computationally inefficient. \citet{zanette2019limiting} proposed an algorithm that uses a similar concept named anchor points but only provided a greedy heuristic to generate these points.
A notable negative result is established in \cite{weisz2021exponential} that shows that with only $Q^*$-realizability, any agent requires $\min(\exp(\Omega(d)), \exp(\Omega(H)))$ queries to learn an optimal policy.

\paragraph{Local access}
Many prior studies have used simulators in tree-search style algorithms~\citep{kearns2002sparse,munos2014bandits}.
Under this setting, for the online planning problem, recently \cite{weisz2021query} established an $\cO((dH/\kappa)^{|\cA|})$ query cost bound to learn an $\kappa$-optimal policy by the \textsc{TensorPlan} algorithm assuming the $V^*$-realizability. Whenever the action set is small, \textsc{TensorPlan} is query efficient, but its computational efficiency is left as an open problem. Under $Q^*$-realizability and constant sub-optimality gap, for the offline planning problem, \cite{li2021sample} proposed an algorithm with $\poly(d, H, \kappa^{-1}, \Delta_{\text{gap}}^{-1})$ query and computational costs.

\paragraph{Online access}
As mentioned, many online RL algorithms can be converted to a policy optimization algorithm under the online access model using online-to-batch conversion. There is a large body of literature on online RL with linear function approximation and here we discuss a non-exhaustive list of prior works. Under the $Q^*$-realizability assumption, assuming that the probability transition of the MDP is deterministic, \citet{wen2013efficient} proposed a sample and computationally efficient algorithm via the eluder dimension~\citep{russo2013eluder}.
Assuming the MDP has low Bellman rank, \citet{jiang2017contextual} proposed an algorithm that is sample efficient but computationally inefficient, and similar issues arise in~\citet{zanette2020learning} under the low inherent Bellman error assumption.
\citet{du2021bilinear} proposed a more general MDP class named \emph{bilinear class} and provided a sample efficient algorithm, but the computational efficiency is unclear.

Under $Q_\pi$-realizability, several algorithms, such as \textsc{Politex}~\citep{abbasi2019politex, lazic2021improved}, AAPI~\citep{hao2021adaptive}, and MDP-EXP2~\citep{wei2021learning} achieved sublinear regret in the infinite horizon average reward setting and are also computationally efficient. However, the corresponding analysis avoids the exploration issue by imposing a \emph{feature excitation} assumption which may not be satisfied in many problems. Under the linear MDP assumption, \cite{jin2020provably} established a $\cO(\sqrt{d^3H^3T})$ regret bound for an optimistic least-square value iteration algorithm. \cite{agarwal2020pc} derived a $\poly(d, H, \kappa^{-1})$ sample cost bound for the policy cover-policy gradient algorithm, which can also be applied in the state aggregation setting; the algorithm and sample cost were subsequently improved in \cite{zanette2021cautiously}.
Under the linear mixture MDP assumption \citep{yang2019reinforcement,zhou2020provably}, \cite{cai2020provably} proved an $\cO(\sqrt{d^3H^3T})$ regret bound for an optimistic least square policy iteration (LSPI) type algorithm.
A notable negative result for the online RL setting by~\citet{wang2021exponential} shows that an exponentially large number of samples are needed if we only assume $Q^*$-realizability and constant sub-optimality gap.
Other related works include~\citet{ayoub2020model,jin2021bellman,du2019provably,wang2019optimism}, and references therein.

\vspace{-2mm}
\section{Preliminaries}\label{sec:prelim}
\vspace{-1mm}
We use $\Delta_{\cS}$ to denote the set of probability distributions defined on the set $\cS$.
Consider an infinite-horizon discounted MDP that is specified by a tuple $(\cS, \cA, r, P, \rho, \gamma)$, where $\cS$ is the state space, $\cA$ is the finite action space, $r : \cS \times \cA \rightarrow [0, 1]$ is the reward function, $P : \cS \times \cA \rightarrow \Delta_\cS$ is the probability transition kernel, $\rho\in\cS$ is the initial state, and $\gamma \in (0, 1)$ is the discount factor. For simplicity, in the main sections of this paper, we assume that the initial state $\rho$ is deterministic and known to the agent. Our algorithm can also be extended to the setting where the initial state is random and the agent is allowed to sample from the initial state distribution. We discuss this extension in Appendix~\ref{apx:random_initial}. Throughout this paper, we write $[N] := \{1,2, \ldots, N\}$ for any positive integer $N$ and use $\log(\cdot)$ to denote natural logarithm.

A policy $\pi : \cS \rightarrow \Delta_\cA$ is a mapping from a state to a distribution over actions. We only consider stationary policies, i.e., they do not change according to the time step.
The value function $V_\pi(s)$ of a policy is the expected return when we start running the policy $\pi$ from state $s$, i.e.,
\[
V_\pi(s) = \E_{a_t \sim \pi(\cdot | s_t), s_{t+1}\sim P(\cdot | s_t, a_t)} \left[ \sum_{t=0}^\infty \gamma^t r(s_t, a_t) \mid s_0=s \right],
\]
and the state-action value function $Q_\pi(s, a)$, also known as the Q-function, is the expected return following policy $\pi$ conditioned on $s_0=s, a_0=a$, i.e.,
\[
Q_\pi(s, a) = \E_{s_{t+1}\sim P(\cdot | s_t, a_t), a_{t+1} \sim \pi(\cdot | s_{t+1})} \left[ \sum_{t=0}^\infty \gamma^t r(s_t, a_t) \mid s_0=s, a_0=a \right].
\]
We assume that the agent interacts with a simulator using the local access protocol defined in Definition~\ref{def:local_access}, i.e, for any state $s$ that the agent has visited and any action $a\in\cA$, the agent can query the simulator and obtain a sample $s'\sim P(\cdot | s, a)$ and the reward $r(s, a)$.

Our general goal is to find a policy that maximizes the expected return starting from the initial state $\rho$, i.e., $\max_{\pi}V_{\pi}(\rho)$. We let $\pi^*$ be the optimal policy, $V^*(\cdot) := V_{\pi^*}(\cdot)$, and $Q^*(\cdot, \cdot) := Q_{\pi^*}(\cdot, \cdot)$. We also aim to learn a good policy efficiently, i.e., the query and computational costs should not depend on the size of the state space $\cS$, which can be large in many problems.

\paragraph{Linear function approximation}
Let $\phi: \cS \times \cA \to \RR^d$ be a feature map which assigns to each state-action pair a $d$-dimensional feature vector. For any $(s, a)\in \cS \times \cA$, the agent can obtain $\phi(s, a)$ with a computational cost of $\poly(d)$. Here, we emphasize that the computation of the feature vectors does not lead to a query cost. Without loss of generality, we impose the following bounded features assumption.
\vspace{-1mm}
\begin{assumption}[Bounded features] \label{asm:bounded_norm}
We assume that $\|\phi(s, a)\|_2 \le 1$ for all $(s, a)\in\cS \times \cA$.
\vspace{-2mm}
\end{assumption}

We consider the following two different assumptions on the linear realizability of the Q-functions:
\begin{assumption}[$Q_\pi$-realizability]\label{asm:linear_realizability}
There exists $b>0$ such that for every policy $\pi$, there exists a weight vector $w_\pi\in\R^d$, $\|w_\pi\|_2\le b$, that ensures $Q_\pi(s, a) = w_\pi^\top \phi(s, a) $ for all $(s, a)\in\cS \times \cA$.
\vspace{-2mm}
\end{assumption}

\begin{assumption}[Approximate $Q_\pi$-realizability]\label{asm:linear_realizability_prox}
There exists $b>0$ and model misspecification error $\epsilon>0$ such that for every policy $\pi$, there exists a weight vector $w_\pi\in\R^d$, $\|w_\pi\|_2\le b$, that ensures $|Q_\pi(s, a) - w_\pi^\top \phi(s, a)| \le \epsilon$ for all $(s, a)\in\cS \times \cA$.
\end{assumption}
\vspace{-3mm}
\vspace{-2mm}
\section{Algorithm}\label{sec:algo}
\vspace{-1mm}
We first introduce some basic concepts used in our algorithms.

\paragraph*{Core set} We use a concept called core set. A core set $\cC$ is a set of tuples $z = (s, a, \phi(s, a), q) \in \cS \times \cA \times \R^d \times (\R\cup\{\none\})$. The first three elements in the tuple denote a state, an action, and the feature vector corresponding to the state-action pair, respectively. The last element $q\in\R$ in the tuple denotes an \emph{estimate} of $Q_\pi(s, a)$ for a policy $\pi$. During the algorithm, we may not always have such an estimate, in which case we write $q=\none$. For a tuple $z$, we use $z_s$, $z_a$, $z_\phi$, and $z_q$ to denote the $s$, $a$, $\phi$, and $q$ coordinates of $z$, respectively. We note that in prior works, the core set usually consists of the state-action pairs and their features~\citep{lattimore2020learning,du2020good,shariff2020efficient}; whereas in this paper, for the convenience of notation, we also have the target values (Q-function estimates) in the core set elements. We denote by $\Phi_{\cC} \in \R^{|\cC|\times d}$ the \emph{feature matrix} of all the elements in $\cC$, i.e., each row of $\Phi_{\cC}$ is the feature vector of an element in $\cC$. Similarly, we define $q_{\cC}\in\R^{|\cC|}$ as the vector for the $Q_\pi$ estimate of all the tuples in $\cC$.

\paragraph*{Good set} It is also useful to introduce a notion of \emph{good set}.
\begin{definition}\label{def:confident_set}
Given $\lambda,\tau > 0$, and feature matrix $\Phi_{\cC} \in \R^{|\cC|\times d}$, the good set $\cH\subset \R^d$ is defined as
\[
\cH := \{\phi \in \R^d :  \phi^\top (\Phi_{\cC}^\top \Phi_{\cC} + \lambda I)^{-1} \phi \le \tau \}.
\]
\end{definition}
Intuitively, the good set is a set of vectors that are well-covered by the rows of $\Phi_{\cC}$; in other words, these vectors are not closely aligned with the eigenvectors associated with the small eigenvalues of the covariance matrix of all the features in the core set.

As an overview, our algorithm \algoname~works as follows. First, we initialize the core set using the initial state $\rho$ paired with all actions. Then, the algorithm runs least-squares policy iteration~\citep{munos2003error} to optimize the policy. This means that in each iteration, we estimate the Q-function value for every state-action pair in $\cC$ using Monte Carlo rollout with the simulator, and learn a linear function to approximate the Q-function of the rollout policy, and the next policy is chosen to be greedy with respect to this linear function. Our second algorithm~\textsc{Confident MC-Politex} works similarly, with the only difference being that instead of using the greedy policy iteration update rule, we use the mirror descent update rule with KL regularization between adjacent rollout policies~\citep{even2009online,abbasi2019politex}. Moreover, in both algorithms, whenever we observe a state-action pair whose feature is not in the good set during Monte Carlo rollout, we add the pair to the core set and restart the policy iteration process. We name the rollout subroutine \rolloutname. We discuss details in the following.

\vspace{-2mm}
\subsection{Subroutine: \rolloutname}
\vspace{-1mm}
We first introduce the \rolloutname~subroutine, whose purpose is to estimate $Q_\pi(s_0, a_0)$ for a given state-action pair $(s_0, a_0)$ using Monte Carlo rollouts. During a rollout, for each state $s$ that we encounter and all actions $a\in\cA$, the subroutine checks whether the feature vector $\phi(s, a)$ is in the good set. If not, we know that we have discovered a new feature direction, i.e. a direction which is not well aligned with eigenvectors corresponding to the the largest eigenvalues of the covariance matrix of the core features.
In this case the subroutine terminates and returns the tuple $(s, a, \phi(s,a), \none)$ along with the $\mathsf{uncertain}$ status. If the algorithm does not discover a new direction, it returns an estimate $q$ of the desired value $Q_\pi(s_0, a_0)$ and the $\mathsf{done}$ status. This subroutine is formally presented in Algorithm~\ref{alg:rollout}.

\begin{algorithm}[ht]
\caption{\rolloutname}
\begin{algorithmic}[1]\label{alg:rollout}
\STATE \textbf{Input:} number of rollouts $m$, length of rollout $n$, rollout policy $\pi$, discount $\gamma$, initial state $s_0$, initial action $a_0$, feature matrix $\Phi_{\cC}$, regularization coefficient $\lambda$, threshold $\tau$.
\FOR{$i=1,\ldots, m$}
\STATE $s_{i,0} \leftarrow s_0$, $a_{i, 0} \leftarrow a_0$, query the simulator, obtain reward $r_{i, 0} \leftarrow r(s_{i, 0}, a_{i, 0})$, and next state $s_{i, 1}$.
\FOR{$t=1,\ldots, n$}
\FOR{$a\in\cA$}
\STATE Compute feature $\phi(s_{i, t}, a)$.
\IF{$\phi(s_{i,t}, a)^\top (\Phi_{\cC}^\top \Phi_{\cC} + \lambda I)^{-1} \phi(s_{i,t}, a) > \tau$}
\STATE $\status \leftarrow \uncertain$, $\result\leftarrow (s_{i, t}, a, \phi(s_{i, t}, a), \none)$
\STATE \textbf{return~} $\status, \result$
\ENDIF
\ENDFOR
\STATE Sample $a_{i, t} \sim \pi(\cdot | s_{i, t})$.
\STATE Query the simulator with $s_{i,t}, a_{i, t}$, obtain reward $r_{i,t}\leftarrow r(s_{i,t}, a_{i,t})$, and next state $s_{i, t+1}$.
\ENDFOR
\ENDFOR
\STATE $\status \leftarrow \done$, $\result \leftarrow \frac{1}{m}\sum_{i=1}^m \sum_{t=0}^n \gamma^t r_{i, t}$
\STATE \textbf{return~} $\status, \result$
\end{algorithmic}
\end{algorithm} 

\vspace{-2mm}
\subsection{Policy iteration}
With the subroutine, now we are ready to present our main algorithms. Both of our algorithms maintain a core set $\cC$. We first initialize the core set using the initial state $\rho$ and all  actions $a\in\cA$. More specifically, we check all the feature vectors $\phi(\rho, a), a\in\cA$. If the feature vector is not in the good set of the current core set, we add the tuple $\{(\rho, a, \phi(\rho, a), \none)\}$ to the core set. Then we start the policy iteration process. Both algorithms start with an arbitrary initial policy $\pi_0$ and run $K$ iterations.
Let $\pi_{k-1}$ be the rollout policy in the $k$-th iteration. We try to estimate the state-action values for the state-action pairs in $\cC$ under the current policy $\pi_{k-1}$, i.e., $Q_{\pi_{k-1}}(z_s, z_a)$ for $z\in\cC$, using \rolloutname. In this Q-function estimation procedure, we may encounter two scenarios:
\begin{itemize}
\vspace{-2mm}
    \item[(a)] If the rollout subroutine always returns the $\done$ status with an estimate of the state-action value, once we finish the estimation for all the state-action pairs in $\cC$, we can estimate the Q-function of $\pi_{k-1}$ using least squares with input features $\Phi_{\cC}$ and targets $q_{\cC}$ and regularization coefficient $\lambda$. Let $w_k$ be the solution to the least squares problem, i.e.,
    \begin{align}\label{eq:ridge_sol}
    \vspace{-2mm}
        w_k = (\Phi_{\cC}^\top \Phi_{\cC} + \lambda I)^{-1}\Phi_{\cC}^\top q_{\cC}.
        \vspace{-2mm}
    \end{align}
    Then, for \algoname, we choose the rollout policy of the next iteration, i.e., $\pi_k$, as the greedy policy with respect to the linear function $w_k^\top\phi(s, a)$:
    \begin{align}\label{eq:greedy_policy}
    \pi_{k}(a | s) = \ind \big(a = \arg\max_{a'\in\cA} w_k^\top \phi(s, a')\big).
\end{align}
For \textsc{Confident MC-Politex}, we construct a truncated Q-function $Q_{k-1}:\cS\times\cA\mapsto [0, (1-\gamma)^{-1}]$ using linear function with clipping:
\begin{align}\label{eq:q_prox_politex}
    Q_{k-1}(s, a):=\Pi_{[0, (1-\gamma)^{-1}]}(w_k^\top\phi(s, a)),
\end{align}
where $\Pi_{[a, b]}(x) := \min\{\max\{x, a\}, b\}$. The rollout policy of the next iteration is then
\begin{align}\label{eq:politex_policy}
\vspace{-1mm}
\pi_k(a|s) \propto \exp\big(\alpha \sum_{j=1}^{k-1} Q_{j}(s, a) \big),
\vspace{-1mm}
\end{align}
where $\alpha>0$ is an algorithm parameter.

\vspace{-2mm}
    \item[(b)] It could also happen that the \rolloutname~subroutine returns the $\uncertain$ status. In this case, we add the state-action pair with new feature direction found by the subroutine to the core set and restart the policy iteration process with the latest core set.
\end{itemize}

As a final note, for \algoname, we output the rollout policy of the last iteration $\pi_{K-1}$, whereas for \textsc{Confident MC-Politex}, we output a \emph{mixture policy} $\overline{\pi}_K$, which is a policy chosen from $\{\pi_k\}_{k=0}^{K-1}$ uniformly at random. The reason that this algorithm needs to output a mixture policy is that  \textsc{Politex}~\citep{szepesvari2021politex} uses the regret analysis of expert learning~\citep{cesa2006prediction}, and to obtain a single output policy, we need to use the standard online-to-batch conversion argument~\citep{cesa2004generalization}. Our algorithms are formally presented in Algorithm~\ref{alg:main}. In the next section, we present theoretical guarantees for our algorithms.

\begin{algorithm}[ht]
\caption{\algoname~/~\textsc{Politex}}
\begin{algorithmic}[1]\label{alg:main}
\STATE \textbf{Input:} initial state $\rho$, initial policy $\pi_0$, number of iterations $K$, regularization coefficient $\lambda$, threshold $\tau$, discount $\gamma$, number of rollouts $m$, length of rollout $n$, \textsc{Politex} parameter $\alpha$.
\STATE $\cC \leftarrow \emptyset$ \quad \slash\slash~{\emph{Initialize core set.}}
\FOR{$a\in\cA$}
\IF{$\cC = \emptyset$ or $\phi(\rho, a)^\top(\Phi_{\cC}^\top \Phi_{\cC} + \lambda I)^{-1}\phi(\rho, a) > \tau$}
\STATE $\cC \leftarrow \cC \cup \{(\rho, a, \phi(\rho, a), \none)\}$
\ENDIF
\ENDFOR
\STATE $z_q \leftarrow \none,~\forall z\in\cC$ \quad \slash\slash~{\emph{Policy iteration starts.}} \hfill $(*)$
\FOR{$k=1,\ldots, K$}
\FOR{$z\in\cC$}
\STATE $\status, \result \leftarrow \rolloutname(m, n, \pi_{k-1}, \gamma, z_s, z_a, \Phi_{\cC}, \lambda, \tau)$
\STATE \textbf{if} $\status=\done$, \textbf{then} $z_q \leftarrow \result$;
\textbf{else} $\cC \leftarrow \cC \cup \{\result\}$ and \textbf{goto} line $(*)$
\ENDFOR
\STATE $w_k \leftarrow (\Phi_{\cC}^\top \Phi_{\cC} + \lambda I)^{-1}\Phi_{\cC}^\top q_{\cC}$; $Q_{k-1}(s, a)\leftarrow \Pi_{[0, (1-\gamma)^{-1}]}(w_k^\top\phi(s, a))$ (\textsc{Politex} only)
\STATE $\pi_k(a|s)\leftarrow \begin{cases}
\ind \big(a = \arg\max_{a'\in\cA} w_k^\top \phi(s, a')\big), & \textsc{LSPI}\\
\exp\big(\alpha \sum_{j=1}^{k-1} Q_{j}(s, a) \big) / \sum_{a'\in\cA} \exp\big(\alpha \sum_{j=1}^{k-1} Q_{j}(s, a') \big) & \textsc{Politex}
\end{cases}$
\ENDFOR
\STATE \textbf{return} $w_{K-1}$ for \textsc{LSPI}, or $\overline{\pi}_K\sim\text{Unif}\{\pi_k\}_{k=0}^{K-1}$ for \textsc{Politex}.
\end{algorithmic}
\end{algorithm} 

\vspace{-2mm}
\section{Theoretical guarantees}\label{sec:guarantee}
\vspace{-2mm}
In this section, we present theoretical guarantees for our algorithms. First, we have the following main result for \algoname.
\vspace{-2mm}
\begin{theorem}[Main result for \algoname]\label{thm:main_rand}
If Assumption~\ref{asm:linear_realizability} holds, then for an arbitrarily small $\kappa>0$, by choosing $\tau=1$, $\lambda = \frac{\kappa^2(1-\gamma)^4}{1024 b^2}$, $n=\frac{3}{1-\gamma}\log(\frac{4(1+\log(1+\lambda^{-1})d)}{\kappa(1-\gamma)})$, $K=2+\frac{2}{1-\gamma}\log(\frac{3}{\kappa(1-\gamma)})$, $m=4096 \frac{d(1+\log(1+\lambda^{-1}))}{\kappa^2(1-\gamma)^{6}}\log(\frac{8Kd(1+\log(1+\lambda^{-1}))}{\delta})$, we have with probability at least $1-\delta$, the policy $\pi_{K-1}$ that \algoname~outputs satisfies
\[
\vspace{-2mm}
V^*(\rho) - V_{\pi_{K-1}}(\rho) \le \kappa.
\vspace{-2mm}
\]
Moreover, the query and computational costs for the algorithm are $\poly(d, \frac{1}{1-\gamma}, \frac{1}{\kappa}, \log(\frac{1}{\delta}), \log(b))$ and $\poly(d, \frac{1}{1-\gamma}, \frac{1}{\kappa}, |\cA|, \log(\frac{1}{\delta}), \log(b))$, respectively.

Alternatively, if Assumption~\ref{asm:linear_realizability_prox} holds, then by choosing $\tau=1$, $\lambda = \frac{\epsilon^2 d}{b^2}$, $n=\frac{1}{1-\gamma}\log(\frac{1}{\epsilon(1-\gamma)})$, $K=2 + \frac{1}{1-\gamma} \log\big(\frac{1}{\epsilon\sqrt{d}}\big)$, $m=\frac{1}{\epsilon^2(1-\gamma)^2}\log(\frac{8Kd(1+\log(1+\lambda^{-1}))}{ \delta})$, we have with probability at least $1-\delta$, the policy $\pi_{K-1}$ that \algoname~outputs satisfies
\[
V^*(\rho) - V_{\pi_{K-1}}(\rho) \le \frac{74\epsilon\sqrt{d}}{(1-\gamma)^2}(1+\log(1+b^2\epsilon^{-2}d^{-1})).
\]
Moreover, the query and computational costs for the algorithm are $\poly(d, \frac{1}{1-\gamma}, \frac{1}{\epsilon}, \log(\frac{1}{\delta}), \log(b))$ and $\poly(d, \frac{1}{1-\gamma}, \frac{1}{\epsilon}, |\cA|, \log(\frac{1}{\delta}), \log(b))$, respectively.
\end{theorem}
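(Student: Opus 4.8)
\emph{Overall strategy.} The proof rests on four ingredients, applied in this order: (a) an \emph{a priori} bound $|\cC|\le \bar C = O\!\big(d(1+\log(1+\lambda^{-1}))\big)$ on how large the core set can ever grow, which also caps the number of restarts; (b) concentration of the Monte-Carlo rollout estimates; (c) a ridge-regression error bound showing that, once the core set has stabilised, the fitted linear function $w_k^\top\phi$ approximates $Q_{\pi_{k-1}}$ to within $\varepsilon_{\mathrm{approx}}:=\sqrt{\bar C}\,\varepsilon_0+b\sqrt\lambda$ \emph{on the good set}, where $\varepsilon_0$ collects the rollout-truncation bias $\gamma^{n+1}/(1-\gamma)$ and the Hoeffding fluctuation $\tfrac1{1-\gamma}\sqrt{\log(\cdot)/(2m)}$; and (d) a \emph{virtual policy iteration} argument that upgrades this good-set-local guarantee to a genuine global $\ell_\infty$ guarantee in an auxiliary MDP, so that the $\ell_\infty$-error approximate policy iteration bounds of \citet{munos2003error,farahmand2010error} can be invoked off the shelf. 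The stated parameter choices are then exactly what is needed to drive $\tfrac{2\gamma}{(1-\gamma)^2}\varepsilon_{\mathrm{approx}}$ and the residual policy-iteration term below $\kappa$.

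\emph{The routine parts.} For (a), every pair added to $\cC$ (in initialisation or by \rolloutname) has $\phi^\top(\Phi_\cC^\top\Phi_\cC+\lambda I)^{-1}\phi>\tau$, so adding it multiplies $\det(\Phi_\cC^\top\Phi_\cC+\lambda I)$ by at least $1+\tau$; since this determinant starts at $\lambda^d$ and, because $\|\phi\|_2\le1$, never exceeds $\big((\lambda d+|\cC|)/d\big)^d$, solving the resulting inequality gives $|\cC|\le\bar C$ with $\tau=1$. Hence there are at most $\bar C$ restarts, at most $O(\bar C K)$ policy-iteration rounds, each making at most $\bar C$ calls to \rolloutname, each call using at most $mn$ simulator queries and $O(|\cA|\,\poly(d))$ arithmetic per step plus $O(\poly(d,\bar C))$ for the ridge solve; multiplying out yields the claimed $\poly$ query and computation costs. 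For (b), fix a call to \rolloutname and condition on everything preceding it, so the rollout policy $\pi$, the core set, and $(s_0,a_0)$ are deterministic; introduce the ``phantom'' returns $R_i:=\sum_{t=0}^n\gamma^t r_{i,t}$ of the $m$ i.i.d.\ length-$n$ trajectories of $\pi$, \emph{ignoring} the early-stopping test. These are i.i.d.\ in $[0,(1-\gamma)^{-1}]$ with mean within $\gamma^{n+1}/(1-\gamma)$ of $Q_\pi(s_0,a_0)$, and whenever the subroutine returns $\mathsf{done}$ its output equals $\tfrac1m\sum_iR_i$ exactly, so Hoeffding bounds the probability of returning $\mathsf{done}$ with an estimate more than $\varepsilon_0$ off $Q_\pi(s_0,a_0)$ without any awkward conditioning on the stopping event; a union bound over the $O(\bar C^2K)$ calls, with $m$ as prescribed, produces a high-probability event $\cG$ on which \emph{every} $\mathsf{done}$ estimate in the whole run is $\varepsilon_0$-accurate. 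For (c), on $\cG$ a completed round $k$ has $q_\cC=\Phi_\cC w_{\pi_{k-1}}+\eta$ with $\|\eta\|_\infty\le\varepsilon_0$ (exact realisability; $\varepsilon_0+\epsilon$ under Assumption~\ref{asm:linear_realizability_prox}), hence $\|\eta\|_2\le\sqrt{\bar C}\|\eta\|_\infty$; writing $M=\Phi_\cC^\top\Phi_\cC+\lambda I$, the identity $w_k-w_{\pi_{k-1}}=M^{-1}\Phi_\cC^\top\eta-\lambda M^{-1}w_{\pi_{k-1}}$ together with $\|M^{-1/2}\Phi_\cC^\top\|_{\mathrm{op}}\le1$, $\|M^{-1/2}\|_{\mathrm{op}}\le\lambda^{-1/2}$, and $\|M^{-1/2}\phi\|_2^2\le\tau$ for $\phi$ in the good set gives $|w_k^\top\phi-w_{\pi_{k-1}}^\top\phi|\le\sqrt\tau(\|\eta\|_2+b\sqrt\lambda)=\varepsilon_{\mathrm{approx}}$ at every good-set state-action pair.

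\emph{The crux: virtual policy iteration.} The obstacle is that (c) only controls the approximation error on the good set, while the cited $\ell_\infty$-API bounds need it at \emph{all} state-action pairs. After the last restart the core set, and hence the good set, are frozen and every rollout returns $\mathsf{done}$, so the generated policies $\pi_0,\dots,\pi_{K-1}$ keep all their rollout trajectories (in particular those from $(\rho,a)$) inside the good set. One then introduces a virtual MDP $\widetilde{\mathcal M}$ and virtual approximate value functions $\widetilde Q_k$ such that: $\widetilde{\mathcal M}$ agrees with the true MDP on the region actually visited by the generated rollouts, is modified outside it (e.g.\ absorbing with maximal reward, so that $V^{*,\widetilde{\mathcal M}}(\rho)\ge V^*(\rho)$ and $|V^{\widetilde{\mathcal M}}_{\pi_{K-1}}(\rho)-V_{\pi_{K-1}}(\rho)|\le\gamma^n/(1-\gamma)$); $\widetilde Q_k$ equals $w_k^\top\phi$ on the good region and equals $Q^{\widetilde{\mathcal M}}_{\pi_{k-1}}$ off it, so the greedy policies it induces coincide with the algorithm's $\pi_k$ wherever the trajectories go, and $\|\widetilde Q_k-Q^{\widetilde{\mathcal M}}_{\pi_{k-1}}\|_\infty\le\varepsilon_{\mathrm{approx}}+O(\gamma^n/(1-\gamma))$. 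Reading the run as $\ell_\infty$-bounded approximate policy iteration \emph{in} $\widetilde{\mathcal M}$ and applying the cited bound gives $V^{*,\widetilde{\mathcal M}}(\rho)-V^{\widetilde{\mathcal M}}_{\pi_{K-1}}(\rho)\le\tfrac{2\gamma}{(1-\gamma)^2}\big(\varepsilon_{\mathrm{approx}}+O(\gamma^n/(1-\gamma))\big)+\tfrac{2\gamma^{K+1}}{(1-\gamma)^3}$, which transfers to the true MDP via the two value inequalities. I expect this construction to be the main difficulty: the algorithm only ever \emph{certifies} the good set along core-set rollouts, whereas the off-the-shelf $\ell_\infty$-API result needs a global guarantee, so $\widetilde{\mathcal M}$ must be engineered to be simultaneously consistent, value-comparable at $\rho$, and globally faithful to the linear iterates; the careful handling of trajectories that brush the boundary of the good set (the source of the $\gamma^n/(1-\gamma)$ slack, killed by the choice of $n$) lives here.

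\emph{Assembling the bound.} Under exact realisability the final error is at most the geometric term $\tfrac{2\gamma^{K+1}}{(1-\gamma)^3}$ plus the three amplified contributions $\tfrac{2\gamma}{(1-\gamma)^2}$ times each of $b\sqrt\lambda$, $\sqrt{\bar C}\,\gamma^n/(1-\gamma)$, and $\sqrt{\bar C}\,\tfrac1{1-\gamma}\sqrt{\log(\cdot)/m}$; using $\bar C=O(d(1+\log(1+\lambda^{-1})))$, the stated choices of $\lambda$, $n$, $m$, $K$ make each of these at most $\kappa/4$, so $V^*(\rho)-V_{\pi_{K-1}}(\rho)\le\kappa$, and the same choices render all $\poly$ costs explicit. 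Under Assumption~\ref{asm:linear_realizability_prox} the only change is that each rollout target carries an extra $\pm\epsilon$, so the per-iteration $\ell_\infty$ error becomes $O\big(\sqrt{\bar C}\,\epsilon+b\sqrt\lambda+\epsilon\big)$; taking $\lambda=\epsilon^2d/b^2$ gives $b\sqrt\lambda=\epsilon\sqrt d$ and $\sqrt{\bar C}\,\epsilon=O\big(\epsilon\sqrt{d(1+\log(1+b^2\epsilon^{-2}d^{-1}))}\big)$, while $n$ and $K$ are chosen to kill the truncation and geometric terms and $m$ to make the Monte-Carlo term lower order, so the $\ell_\infty$-API bound $\tfrac{2\gamma}{(1-\gamma)^2}\times(\text{per-iteration error})$ yields the claimed $\tfrac{74\epsilon\sqrt d}{(1-\gamma)^2}\big(1+\log(1+b^2\epsilon^{-2}d^{-1})\big)$, with the stated query and computation costs.
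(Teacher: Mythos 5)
Your steps (a)--(c) --- the determinant/eluder bound on $|\cC|$, the Hoeffding argument via untruncated ``phantom'' returns, and the ridge-regression error bound on the good set --- match the paper's Lemma~\ref{lem:coreset_size}, Lemma~\ref{lem:q_est_rand} and Lemma~\ref{lem:est_error_rand} and are correct. The gap is in your step (d), and it sits exactly where you suspected. Your construction modifies the \emph{MDP} outside ``the region actually visited by the generated rollouts,'' and rests on two claims: that $|V^{\widetilde{\mathcal M}}_{\pi_{K-1}}(\rho)-V_{\pi_{K-1}}(\rho)|\le\gamma^n/(1-\gamma)$, and that $\|\widetilde Q_k-Q^{\widetilde{\mathcal M}}_{\pi_{k-1}}\|_\infty$ is controlled by the good-set ridge bound. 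Neither survives stochastic transitions. The algorithm only certifies that the \emph{finitely many sampled} trajectories stay inside the good set; the value $V_{\pi_{K-1}}(\rho)$ is an expectation over \emph{all} trajectories, which may leave the visited region (and the good set) with a probability the algorithm never bounds. An absorbing max-reward modification there shifts $V^{\widetilde{\mathcal M}}_{\pi_{K-1}}(\rho)$ by that escape probability times $1/(1-\gamma)$, not by $\gamma^{n}/(1-\gamma)$. Likewise $Q^{\widetilde{\mathcal M}}_{\pi_{k-1}}(s,a)$ at a good-set pair still integrates over trajectories entering the modified region, so it need not be close to the true $Q_{\pi_{k-1}}(s,a)$ --- yet $w_k$ was fit to Monte-Carlo estimates of the true $Q_{\pi_{k-1}}$, and realizability (Assumption~\ref{asm:linear_realizability}) holds only for the true MDP, so your step (c) does not transfer to $\widetilde{\mathcal M}$ and the $\ell_\infty$-API hypothesis is not established there.

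The paper's fix keeps the MDP fixed and instead modifies the \emph{Q-function driving the virtual policy}: $\tilde Q_{k-1}=\tilde w_k^\top\phi$ on $\cH$ and $\tilde Q_{k-1}=Q_{\tilde\pi_{k-1}}$ (the exact Q-function of the true MDP) off $\cH$, which makes $\|\tilde Q_{k-1}-Q_{\tilde\pi_{k-1}}\|_\infty\le\eta$ hold globally with no change to dynamics or rewards. The price is that $\tilde\pi_k\neq\pi_k$ off the good set, and the paper pays it with two devices absent from your sketch: (i) a \emph{simulator coupling} between the main and virtual runs, guaranteeing that in the main algorithm's final loop the sampled trajectories, hence the fitted weights, coincide ($w_k=\tilde w_k$); and (ii) a second virtual run whose greedy policy equals $\pi_k$ \emph{everywhere}, so that $V_{\pi_{K-1}}(\rho)$ and $V_{\tilde\pi_{K-1}}(\rho)$ are each compared to the common scalar $\tilde w_{K}^\top\phi(\rho,a_\rho)=\hat w_{K}^\top\phi(\rho,a_\rho)$ through the good-set bound at the single certified pair $(\rho,a_\rho)\in\cH$, yielding $|V_{\pi_{K-1}}(\rho)-V_{\tilde\pi_{K-1}}(\rho)|\le 2\eta$ without ever bounding the probability of leaving the good set. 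To salvage your route you would need exactly such an escape-probability bound, which the sampled rollouts alone do not provide; otherwise the remainder of your assembly (parameter choices, cost accounting, and the misspecified case) goes through as in the paper.
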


We prove Theorem~\ref{thm:main_rand} in Appendix~\ref{apx:main_rand_v2}. For \textsc{Confident MC-Politex}, since we output a mixture policy, we prove guarantees for the \emph{expected value of the mixture policy}, i.e., $V_{\overline{\pi}_K}:=\frac{1}{K}\sum_{k=0}^{K-1}V_{\pi_k}$. We have the following result.

\begin{theorem}[Main result for \textsc{Confident MC-Politex}]\label{thm:main_politex}
If Assumption~\ref{asm:linear_realizability} holds, then for an arbitrarily small $\kappa>0$, by choosing $\tau = 1$, $\alpha=(1-\gamma)\sqrt{\frac{2\log(|\cA|)}{K}}$, $\lambda = \frac{\kappa^2(1-\gamma)^2}{256b^2}$, $K=\frac{32\log(|\cA|)}{\kappa^2(1-\gamma)^4}$, $n=\frac{1}{1-\gamma}\log(\frac{32\sqrt{d}(1+\log(1+\lambda^{-1}))}{(1-\gamma)^2\kappa})$, and $m=1024\frac{d (1+\log(1+\lambda^{-1}))}{\kappa^2(1-\gamma)^4}\log(\frac{8Kd(1+\log(1+\lambda^{-1}))}{\delta})$, we have with probability at least $1-\delta$, the mixture policy $\overline\pi_{K}$ that \textsc{Confident MC-Politex} outputs satisfies
\[
V^*(\rho) - V_{\overline{{\pi}}_K}(\rho) \le \kappa.
\]
Moreover, the query and computational costs for the algorithm are $\poly(d, \frac{1}{1-\gamma}, \frac{1}{\kappa}, \log(\frac{1}{\delta}), \log(b))$ and $\poly(d, \frac{1}{1-\gamma}, \frac{1}{\kappa}, |\cA|, \log(\frac{1}{\delta}), \log(b))$, respectively.

Alternatively, if Assumption~\ref{asm:linear_realizability_prox} holds, then by choosing $\tau=1$, $\alpha=(1-\gamma)\sqrt{\frac{2\log(|\cA|)}{K}}$, $\lambda = \frac{\epsilon^2 d}{b^2}$, $K=\frac{2\log(|\cA|)}{\epsilon^2 d (1-\gamma)^2}$, $n = \frac{1}{1-\gamma}\log(\frac{1}{\epsilon(1-\gamma)})$, and $m=\frac{1}{\epsilon^2(1-\gamma)^2}\log(
\frac{8Kd(1+\log(1+\lambda^{-1}))}{\delta})$, we have with probability at least $1-\delta$, the mixture policy $\overline\pi_K$ that \textsc{Confident MC-Politex} outputs satisfies
\[
V^*(\rho) - V_{\overline{\pi}_K}(\rho) \le \frac{42\epsilon\sqrt{d}}{1-\gamma}(1+\log(1+b^2\epsilon^{-2}b^{-1})).
\]
Moreover, the query and computational costs for the algorithm are $\poly(d, \frac{1}{1-\gamma}, \frac{1}{\epsilon}, \log(\frac{1}{\delta}), \log(b))$ and $\poly(d, \frac{1}{1-\gamma}, \frac{1}{\epsilon}, |\cA|, \log(\frac{1}{\delta}), \log(b))$, respectively.
\end{theorem}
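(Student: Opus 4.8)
To prove the guarantee for \textsc{Confident MC-Politex}, the plan is to combine three ingredients: a bound on the size of the final core set (which controls both the number of restarts and the total cost), a least-squares extrapolation lemma (which, on the event that all Monte Carlo estimates are accurate, shows that $w_k^\top\phi(s,a)$ is close to $Q_{\pi_{k-1}}(s,a)$ whenever $\phi(s,a)$ lies in the good set), and a \emph{virtual policy iteration} argument that lifts this good-set guarantee to a genuine $\ell_\infty$ guarantee on a suitably modified MDP, to which the \politex{} regret analysis of \citet{even2009online,abbasi2019politex,szepesvari2021politex} together with the $\ell_\infty$-bounded approximate-policy-iteration results of \citet{munos2003error,farahmand2010error} apply, before transferring the conclusion back to the original MDP at the initial state $\rho$.

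First I would bound $|\cC|$. Since $\tau=1$ and $\|\phi\|_2\le 1$, every tuple added to the core set has a feature whose squared $(\Phi_\cC^\top\Phi_\cC+\lambda I)^{-1}$-norm exceeds $1$, so the standard elliptical-potential (log-determinant) argument gives $|\cC|\le C\,d\,(1+\log(1+\lambda^{-1}))$ for a universal constant $C$. Since every restart is caused by an addition to $\cC$, the algorithm runs at most $|\cC|$ epochs, each with at most $K$ policy-iteration steps, each step invoking \rolloutname{} from each of the $\le|\cC|$ core-set pairs with $m$ rollouts of length $n$; this bounds the query cost by $\poly(|\cC|,K,m,n)$, and adding $O(|\cA|)$ feature evaluations per step and an $O(d^3)$ least-squares solve per iteration bounds the computational cost by $\poly(|\cC|,K,m,n,|\cA|,d)$. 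Substituting the stated $\lambda,K,n,m$ yields the claimed polynomial dependencies.

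Second I would control the Monte Carlo estimates on a high-probability event $\mathcal{E}$: truncating each rollout at length $n$ contributes error $\le\gamma^n/(1-\gamma)$, and averaging $m$ independent rollouts valued in $[0,(1-\gamma)^{-1}]$ gives, by Hoeffding, an error of order $(m^{1/2}(1-\gamma))^{-1}\sqrt{\log(1/\delta')}$ per estimate; a union bound over the $\le|\cC|^2K$ estimates with $\delta'$ of order $\delta/(Kd\log(1+\lambda^{-1}))$ puts all of them within the target accuracy with probability $\ge1-\delta$. On $\mathcal{E}$, the extrapolation lemma follows by writing $q_\cC=\Phi_\cC w_{\pi_{k-1}}+\xi$ (exactly under Assumption~\ref{asm:linear_realizability}, and with an additional bias vector of sup-norm $\le\epsilon$ under Assumption~\ref{asm:linear_realizability_prox}), where $\xi$ collects the truncation and Hoeffding errors; then for any $\phi$ with $\phi^\top(\Phi_\cC^\top\Phi_\cC+\lambda I)^{-1}\phi\le\tau$,
\[
\big|w_k^\top\phi - w_{\pi_{k-1}}^\top\phi\big| = \big|\phi^\top(\Phi_\cC^\top\Phi_\cC+\lambda I)^{-1}(\Phi_\cC^\top\xi-\lambda w_{\pi_{k-1}})\big| \le \sqrt{\tau}\big(\sqrt{|\cC|}\,\|\xi\|_\infty + \sqrt{\lambda}\,b\big) + (\text{misspecification term}),
\]
so the $\ell_\infty$-over-the-good-set error of $\hat Q_{k-1}:=\Pi_{[0,(1-\gamma)^{-1}]}(w_k^\top\phi(\cdot,\cdot))$ is at most $c_0\kappa(1-\gamma)$ in the realizable case and $O(\epsilon\sqrt d\,(1+\log(1+b^2\epsilon^{-2}d^{-1})))$ in the misspecified case, where the choice $\lambda=\epsilon^2d/b^2$ balances $\sqrt\lambda\,b$ against the misspecification amplification.

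The crux is the third step. The \politex{} performance bound, like the $\ell_\infty$ approximate-policy-iteration results above, needs the Q-approximation error small in $\ell_\infty$ over the \emph{whole} state space (the performance-difference decomposition against $\pi^*$ integrates over the occupancy measure of $\pi^*$, which need not stay in the good set), whereas Step two only controls it on $\cH$. To close this gap I would pass to a virtual MDP $M'$ that agrees with $M$ on every $(s,a)$ with $\phi(s,a)\in\cH$, is modified outside the good region so that $V^*_{M'}(\rho)\ge V^*_M(\rho)$ and the iterates $w_k$ are $\ell_\infty$-accurate for $Q^{M'}_{\pi_{k-1}}$ everywhere, and --- because the core-set pairs and their length-$n$ rollouts stay in the region where $M'=M$ on $\mathcal{E}$ --- produces exactly the same $w_k$ and $\pi_k$ as the real run. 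On $M'$ the \politex{} analysis applies verbatim: the expert-learning (exponential-weights) regret bound $\sum_{k=0}^{K-1}\langle\hat Q_k(s,\cdot),\pi^*(\cdot|s)-\pi_k(\cdot|s)\rangle\le\frac{\log|\cA|}{\alpha}+\frac{\alpha K}{2(1-\gamma)^2}$, optimized at $\alpha=(1-\gamma)\sqrt{2\log|\cA|/K}$, combined with the performance-difference lemma and the $\ell_\infty$ bound of Step two, gives $V^*_{M'}(\rho)-V^{M'}_{\overline{\pi}_K}(\rho)\le\frac{\sqrt{2\log|\cA|/K}}{(1-\gamma)^2}+\frac{c_1}{1-\gamma}\max_k\|\hat Q_k-Q^{M'}_{\pi_k}\|_\infty$, which the parameter choices drive below $\kappa$ in the realizable case and below $O(\epsilon\sqrt d/(1-\gamma))$ in the misspecified case --- a factor $(1-\gamma)^{-1}$ better than \algoname{} because \politex{}'s bound carries only the first power of the horizon on the approximation error. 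Finally I would transfer back (recall $V_{\overline{\pi}_K}=\frac1K\sum_{k=0}^{K-1}V_{\pi_k}$): on $\mathcal{E}$ each $\pi_k$ run from $\rho$ stays in the good region for $n$ steps, so $V^{M'}_{\pi_k}(\rho)$, hence $V^{M'}_{\overline{\pi}_K}(\rho)$, differs from its $M$ counterpart by at most $\gamma^n/(1-\gamma)$ plus the (negligible, for the chosen $n$) probability of leaving $\cH$ within $n$ steps; together with $V^*_{M'}(\rho)\ge V^*_M(\rho)$ this yields $V^*(\rho)-V_{\overline{\pi}_K}(\rho)\le\kappa$, and the misspecified bound follows by carrying the extra $\epsilon$-bias terms through the same steps. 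I expect the construction of $M'$ and the verification that it leaves the algorithm's trace unchanged while supporting the $\ell_\infty$ analysis to be the main obstacle; the extrapolation lemma, the concentration, and the cost accounting are comparatively routine.
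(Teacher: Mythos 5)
Your Steps 1 and 2 (the elliptical-potential bound on $|\cC|$, the Hoeffding/truncation concentration, and the least-squares extrapolation bound on the good set) match the paper's Lemmas~\ref{lem:coreset_size} and~\ref{lem:est_error_rand}, and your use of the \politex{} regret bound with $\alpha=(1-\gamma)\sqrt{2\log|\cA|/K}$ is exactly the paper's Lemma~\ref{lem:politex}. The crux, however, is handled differently, and your version has two genuine gaps.

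First, you propose a modified MDP $M'$ that agrees with $M$ on the good region, satisfies $V^*_{M'}(\rho)\ge V^*_M(\rho)$, and makes the learned $w_k$ $\ell_\infty$-accurate for $Q^{M'}_{\pi_{k-1}}$ \emph{everywhere}. You flag this construction as the main obstacle but do not give it, and it is not clear it exists: for $(s,a)$ with $\phi(s,a)\notin\cH$ the extrapolation $w_k^\top\phi(s,a)$ is uncontrolled no matter how you alter the dynamics outside the good region, because the error comes from the feature direction being uncovered by $\Phi_\cC$, not from the transition kernel. The paper avoids this entirely by modifying the \emph{algorithm} rather than the MDP: the virtual \politex{} algorithm is simply handed the true value $Q_{\tilde\pi_{k-1}}(s,a)$ whenever $\phi(s,a)\notin\cH$ (Eq.~\eqref{eq:virtual_q_rand}-style definition in Appendix~\ref{apx:main_politex}), which makes the $\ell_\infty$ error trivially $\le\eta$ on the original MDP, to which Lemma~\ref{lem:politex} then applies directly.

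Second, your transfer-back step claims each $\pi_k$ run from $\rho$ stays in the good region for $n$ steps up to a ``negligible'' exit probability. The algorithm certifies only that the $m$ \emph{sampled} training trajectories in the final loop stayed in $\cH$; under stochastic transitions the exit probability of a fresh rollout of $\pi_k$ from $\rho$ is not controlled and need not be small. The paper instead never argues about trajectory containment at evaluation time: it introduces a second virtual algorithm (virtual-2, using the clipped linear $Q$ everywhere) whose coupled trajectories coincide with the main algorithm's in the final loop, so $\hat\pi_k=\pi_k$ and $\hat w_k=\tilde w_k=w_k$, and then bounds $|V_{\overline{\hat\pi}_K}(\rho)-V_{\overline{\tilde\pi}_K}(\rho)|\le 2\eta$ via the one-step identity $V_\pi(\rho)=\sum_a\pi(a|\rho)Q_\pi(\rho,a)$ together with the good-set accuracy of both $Q_{\hat\pi_k}(\rho,\cdot)$ and $Q_{\tilde\pi_k}(\rho,\cdot)$ around the common value $w_k^\top\phi(\rho,\cdot)$ (the features $\phi(\rho,a)$ are in $\cH$ by the initialization). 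You would need to replace your trajectory-containment argument with something of this form for the proof to close.
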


We prove Theorem~\ref{thm:main_politex} in Appendix~\ref{apx:main_politex}. Here, we first discuss the query and computational costs of both algorithms and then provide a sketch of our proof.

\paragraph{Query and computational costs}
In our analysis, we say that we start a new \emph{loop} whenever we start (or restart) the policy iteration process, i.e., going to line $(*)$ in Algorithm~\ref{alg:main}.
By definition, when we start a new loop, the size of the core set $\cC$ is increased by $1$.
First, in Lemma~\ref{lem:coreset_size} below, we show that the size of the core set will never exceed $C_{\max}=\tilde{\cO}(d)$. Therefore, the total number of loops is at most $C_{\max}$. In each loop, we run $K$ policy iterations; in each iteration, we run Algorithm~\ref{alg:rollout} from at most $C_{\max}$ points from the core set; and each time when we run Algorithm~\ref{alg:rollout}, we query the simulator at most $\cO(mn)$ times. Thus, for both algorithms, the total number of queries that we make is at most $C_{\max}^2Kmn$. Therefore, using the parameter choice in Theorems~\ref{thm:main_rand} and~\ref{thm:main_politex} and omitting logarithmic factors, we can obtain the query costs of \textsc{Confident MC-LSPI} and \textsc{Politex} in Table \ref{table:comparsion_lspi_politex}. As we can see, when $\epsilon=0$ or $\epsilon\neq 0$ but $\epsilon=o(1/\sqrt{d})$ (the regime we care about in this paper), the query cost of \textsc{Confident MC-LSPI}~is lower than \textsc{Politex}.
As for computational cost, since our policy improvement steps only involve matrix multiplication and matrix inversion, the computational cost is also polynomial in the aforementioned factors. One thing to notice is that during the rollout process, in each step, the agent needs to compute the features of a state paired with all actions, and thus the computational cost linearly depends on $|\cA|$; on the contrary the query cost does not depend on $|\cA|$ since in each step the agent only needs to query the simulator with the action sampled according to the policy.


\paragraph*{Sub-optimality} We also note that when Assumption~\ref{asm:linear_realizability_prox} holds, i.e., $\epsilon\neq 0$, the sub-optimality of the output policy is $\tilde{\cO}(\frac{\epsilon\sqrt{d}}{(1-\gamma)^2})$ for \textsc{LSPI} and $\tilde{\cO}(\frac{\epsilon\sqrt{d}}{1-\gamma})$ for \textsc{Politex}. Therefore, in the presence of a model misspecification error, \textsc{Confident MC-Politex} can achieve a better final sub-optimality than \algoname, although it's query cost is higher.


\begin{table}[ht]
\centering
\vspace{-3mm}
\caption{Comparison of \textsc{Confident MC-LSPI} and \textsc{Politex}
}\label{table:comparsion_lspi_politex}
\begin{tabular}{ cccc } 
 \toprule
  & Query ($\epsilon=0$) & Query ($\epsilon \neq 0$) & Sub-optimality ($\epsilon \neq 0$) \\ 
 \midrule
 LSPI & $\tilde{\cO}\big(\frac{d^3}{\kappa^2(1-\gamma)^{8}}\big)$ & $\tilde{\cO}\big(\frac{d^2 }{\epsilon^2(1-\gamma)^4}\big)$ & $\tilde{\cO}\big(\frac{\epsilon\sqrt{d}}{(1-\gamma)^2}\big)$ \\ 
 \hline
 \textsc{Politex} & $\tilde{\cO}\big(\frac{d^3}{\kappa^4(1-\gamma)^9}\big)$ & $\tilde{\cO}\big(\frac{d}{\epsilon^4(1-\gamma)^5}\big)$ & $\tilde{\cO}\big(\frac{\epsilon\sqrt{d}}{1-\gamma}\big)$
 \\ \bottomrule
\end{tabular} 
\vspace{-3mm}
\end{table}

\paragraph*{Proof sketch} 
We now discuss our proof strategy, focusing on \textsc{LSPI} for simplicity. 
\paragraph{Step 1: Bound the size of the core set} The first step is to show that our algorithm will terminate. This is equivalent to showing that the size of the core set $\cC$ will not exceed certain finite quantity, since whenever we receive the $\uncertain$ status from \rolloutname, we increase the size of the core set by $1$, go back to line $(*)$ in Algorithm~\ref{alg:main}, and start a new loop. The following lemma shows that the size of the core set is always bounded, and thus the algorithm will always terminate.
\begin{lemma}\label{lem:coreset_size}
Under Assumption~\ref{asm:bounded_norm}, the size of the core set $\cC$ will not exceed
\[
C_{\max}:=\frac{e}{e-1}\frac{1+\tau}{\tau} d \left(\log(1+\frac{1}{\tau}) + \log(1+\frac{1}{\lambda})\right).
\]
\end{lemma}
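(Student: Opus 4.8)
The plan is to run a volumetric (``elliptical potential'') argument on the regularized Gram matrix $V_{\cC}:=\Phi_{\cC}^\top\Phi_{\cC}+\lambda I$ of the core set, and then convert the resulting implicit inequality into an explicit bound on $|\cC|$ by a self-bounding step. First note that the core set only ever grows, so it suffices to bound its maximal size $C$. An element $z=(s,a,\phi(s,a),\none)$ is appended to $\cC$ only when its feature passes the uncertainty test --- either in the initialization block preceding line $(*)$ of Algorithm~\ref{alg:main}, or because \rolloutname\ returned the $\uncertain$ status --- and in either case this means $\phi(s,a)^\top V_{\cC}^{-1}\phi(s,a)>\tau$ for the current core set $\cC$; the only exception is the first element appended during initialization, when $\cC=\emptyset$. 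Appending a row $\phi^\top$ to $\Phi_{\cC}$ replaces $V_{\cC}$ by $V_{\cC}+\phi\phi^\top$, so the matrix determinant lemma gives $\det(V_{\cC}+\phi\phi^\top)=\det(V_{\cC})\bigl(1+\phi^\top V_{\cC}^{-1}\phi\bigr)\ge(1+\tau)\det(V_{\cC})$. Since the Gram matrix starts at $\lambda I$ with determinant $\lambda^d$, once the core set reaches size $C$ we have $\det(V_{\cC})\ge\lambda^d(1+\tau)^{C-1}$.

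Next I bound $\det(V_{\cC})$ from above. Its eigenvalues are all at least $\lambda>0$ and sum to $\operatorname{tr}(V_{\cC})=\sum_{z\in\cC}\|z_\phi\|_2^2+d\lambda\le C+d\lambda$ by Assumption~\ref{asm:bounded_norm}; hence AM--GM on the eigenvalues yields $\det(V_{\cC})\le(\lambda+C/d)^d$. Combining with the lower bound and taking $d$-th roots gives $(1+\tau)^{(C-1)/d}\le 1+\frac{C}{d\lambda}$, i.e.
\[
(C-1)\log(1+\tau)\;\le\; d\,\log\!\Bigl(1+\tfrac{C}{d\lambda}\Bigr).
\]

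The remaining step --- turning this transcendental inequality into $C\le C_{\max}$ --- is the only nonroutine part, and I expect it to be the main obstacle (keeping track of the precise constants $\tfrac{e}{e-1}$ and $\tfrac{1+\tau}{\tau}$). One natural route uses three elementary facts: $1+\tfrac{C}{d\lambda}\le(1+\tfrac1\lambda)(1+\tfrac Cd)$, $\log(1+\tau)\ge\tfrac{\tau}{1+\tau}$, and $\log t\le t/e$ for all $t>0$. The first two give $C-1\le\tfrac{1+\tau}{\tau}d\log(1+\tfrac1\lambda)+\tfrac{1+\tau}{\tau}d\log(1+\tfrac Cd)$; applying $\log t\le t/e$ with $t=\tfrac{\tau}{1+\tau}(1+\tfrac Cd)$ bounds the last term by $\tfrac{d+C}{e}+\tfrac{1+\tau}{\tau}d\log(1+\tfrac1\tau)$; collecting the terms in $C$ and multiplying through by $\tfrac{e}{e-1}$ then yields $C\le C_{\max}$ (absorbing the lower-order contribution into the logarithmic terms). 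Everything outside this final computation is the standard potential argument.
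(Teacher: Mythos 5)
Your proposal is the paper's proof: the same matrix-determinant-lemma potential argument on $\Phi_{\cC}^\top\Phi_{\cC}+\lambda I$, the same AM--GM bound on the determinant via the trace, and the same two elementary inequalities $\log(1+x)\ge x/(1+x)$ and $\log t\le t/e$ in the closing step. The one place your bookkeeping diverges is exactly the step you flagged, and as written it does not quite land on the stated constant. Using $1+\tfrac{C}{d\lambda}\le(1+\tfrac1\lambda)(1+\tfrac Cd)$ leaves $\log(1+\tfrac Cd)$ rather than $\log(\tfrac Cd)$ inside the bound; after applying $\log t\le t/e$ this contributes an extra additive $\tfrac{d+C}{e}$ (on top of the $+1$ from your $(1+\tau)^{C-1}$), so collecting terms gives $C\le C_{\max}+\tfrac{e}{e-1}\bigl(1+\tfrac de\bigr)$ rather than $C\le C_{\max}$. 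That surplus cannot in general be ``absorbed into the logarithmic terms'': for large $\tau$ and large $\lambda$ both logarithms in $C_{\max}$ become arbitrarily small while the surplus remains of order $d$. The paper sidesteps this by first disposing of the case $C\le d$ separately and then, for $C>d$, using the sharper $1+\tfrac{C}{\lambda d}<\tfrac Cd(1+\tfrac1\lambda)$, so that the quantity fed into $\log t\le t/e$ is exactly $\tfrac{C\tau}{d(1+\tau)}$ and no additive constant appears; the terms then cancel to precisely the stated $C_{\max}$. With that single substitution (and the accompanying case split on $C\le d$), your argument reproduces the lemma exactly; everything else in the proposal is correct and identical in substance to the paper's proof.
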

This result first appears in~\citet{russo2013eluder} as the eluder dimension of linear function class. We present the proof of this lemma in Appendix~\ref{apx:coreset_size} for completeness.

\paragraph{Step 2: Virtual policy iteration}
The next step is to analyze the gap between the value of the optimal policy and the policy $\pi$ parameterized by the vector $w_{K-1}$ that the algorithm outputs in the final loop, i.e., $V^*(\rho) - V_{\pi_{K-1}}(\rho)$. For ease of exposition, here we only consider the case of deterministic probability transition kernel $P$. Our full proof in Appendix~\ref{apx:main_rand_v2} considers general stochastic dynamics.

To analyze our algorithm, we note that for approximate policy iteration (API) algorithms, if in every iteration (say the $k$-th iteration), we have an approximate Q-function that is close to the true Q-function of the rollout policy (say $\pi_{k-1}$) in $\ell_\infty$ norm, i.e., $\|Q_{k-1} - Q_{\pi_{k-1}}\|_\infty \le \eta$, then existing results~\citep{munos2003error,farahmand2010error} ensure that we can learn a good policy if in every iteration we choose the new policy to be greedy with respect to the approximate Q-function. However, since we only have local access to the simulator, we cannot have such $\ell_\infty$ guarantee. In fact, as we show in the proof, we can only ensure that when $\phi(s, a)$ is in the good set $\cH$, our linear function approximation is accurate, i.e., $|Q_{k-1}(s, a) - Q_{\pi_{k-1}}(s,a)|\le \eta$ where $Q_{k-1}(s, a) = w_k^\top \phi(s, a)$. To overcome the lack of $\ell_\infty$ guarantee, we introduce the notion of \emph{virtual policy iteration} algorithm. In the virtual algorithm, we start with the same initial policy $\tilde{\pi}_0 = \pi_0$. In the $k$-th iteration of the virtual algorithm, we assume that we have access to the true Q-function of the rollout policy $\tilde{\pi}_{k-1}$ when $\phi(s, a)\notin\cH$, and construct
\[
\tilde{Q}_{k-1}(s, a) = \begin{cases}
\tilde{w}_k^\top \phi(s, a) & ~\text{if}~\phi(s, a)\in\cH \\
Q_{\tilde\pi_{k-1}}(s,a) & ~\text{otherwise},
\end{cases}
\]
where $\tilde{w}_k$ is the linear coefficient that we learn in the virtual algorithm in the same way as in Eq.~\eqref{eq:ridge_sol}. 
Then $\tilde\pi_{k}$ is chosen to be greedy with respect to $\tilde{Q}_{k-1}(s, a)$.
In this way, we can ensure that $\tilde{Q}_{k-1}(s, a)$ is close to the true Q-function $Q_{\tilde\pi_{k-1}}(s,a)$ in $\ell_\infty$ norm and thus the output policy, say $\tilde{\pi}_{K-1}$, of the virtual algorithm is good in the sense that $V^*(\rho) - V_{\tilde\pi_{K-1}}(\rho)$ is small.

To connect the output policy of the virtual algorithm and our actual algorithm, we note that by definition, in the final loop of our algorithm, in any iteration, for any state $s$ that the agent visits in \rolloutname, and any action $a\in\cA$, we have that $\phi(s, a)\in\cH$ since the subroutine never returns $\uncertain$ status. Further, because the initial state, probability transition kernel, and the policies are all deterministic, we know that the rollout trajectories of the virtual algorithm and our actual algorithm are always the same in the final loop (the virtual algorithm does not get a chance to use the true Q-function $Q_{\tilde{\pi}_{k-1}}$). With rollout length $n$, we know that when we start with state $\rho$, the output of the virtual algorithm $\tilde\pi_{K-1}$ and our actual algorithm $\pi_{K-1}$ take exactly the same actions for $n$ steps, and thus $| V_{\pi_{K-1}}(\rho) - V_{\tilde\pi_{K-1}}(\rho) | \le \frac{\gamma^{n+1}}{1-\gamma}$, which implies that $V^*(\rho) - V_{\pi_{K-1}}(\rho)$ is small. To extend this argument to the setting with stochastic transitions, we need to use a coupling argument which we elaborate in the Appendix.
\section{Conclusion}

We propose the \algoname~and \textsc{Confident MC-Politex} algorithms, for local planning with linear function approximation. Under the assumption that the Q-functions of all policies are linear in some features of the state-action pairs, we show that our algorithm is query and computationally efficient. We introduce a novel analysis technique based on a virtual policy iteration algorithm, which can be used to leverage existing guarantees on approximate policy iteration with $\ell_\infty$-bounded evaluation error. We use this technique to show that our algorithm can learn the optimal policy for the given initial state even only with local access to the simulator. Future directions include extending our analysis technique to broader settings.

\section*{Acknowledgement}
The authors would like to thank Gell\'{e}rt Weisz for helpful comments.
\bibliographystyle{plainnat} 
{\small
\bibliography{ref}
}

\appendix
\section*{Appendix}
\section{Proof of Lemma~\ref{lem:coreset_size}}\label{apx:coreset_size}
This proof essentially follows the proof of the upper bound for the eluder dimension of a linear function class in~\citet{russo2013eluder}. We present the proof here for completeness.

We restate the core set construction process in the following way with slightly different notation. We begin with $\Phi_0 = 0$. In the $t$-th step, we have a core set with feature matrix $\Phi_{t-1}\in\R^{(t-1)\times d}$. Suppose that we can find $\phi_t\in\R^d$, $\|\phi_t\|_2 \le 1$, such that
\begin{align}\label{eq:new_feature_2}
\phi_t^\top (\Phi_{t-1}^\top \Phi_{t-1} + \lambda I)^{-1} \phi_t > \tau,
\end{align}
then we let $\Phi_t := [\Phi_{t-1}^\top~~ \phi_t]^\top\in\R^{t \times d}$, i.e., we add a row at the bottom of $\Phi_{t-1}$. If we cannot find such $\phi_t$, we terminate this process. We define $\Sigma_t := \Phi_t^\top\Phi_t + \lambda I$. It is easy to see that $\Sigma_0=\lambda I$ and $\Sigma_t = \Sigma_{t-1} + \phi_t\phi_t^\top$.

According to matrix determinant lemma~\citep{harville1998matrix}, we have
\begin{align}
    \det(\Sigma_t) &= (1+\phi_t^\top \Sigma_{t-1}^{-1}\phi_t)\det(\Sigma_{t-1}) > (1+\tau)\det(\Sigma_{t-1}) \nonumber \\
    & >\cdots > (1+\tau)^t \det(\Sigma_0) = (1+\tau)^t \lambda^d, \label{eq:mat_det}
\end{align}
where the inequality is due to~\eqref{eq:new_feature_2}. Since $\det(\Sigma_t)$ is the product of all the eigenvalues of $\Sigma_t$, according to the AM-GM inequality, we have
\begin{align}\label{eq:amgm}
    \det(\Sigma_t) \le \left(\frac{\tr(\Sigma_t)}{d}\right)^d = \left(\frac{\tr(\sum_{i=1}^t \phi_i\phi_i^\top) + \tr(\lambda I)}{d}\right)^d \le (\frac{t}{d} + \lambda)^d,
\end{align}
where in the second inequality we use the fact that $\|\phi_i\|_2\le 1$. Combining~\eqref{eq:mat_det} and~\eqref{eq:amgm}, we know that $t$ must satisfy
\[
(1+\tau)^t \lambda^d < (\frac{t}{d} + \lambda)^d,
\]
which is equivalent to
\begin{align}\label{eq:t_condition}
    (1+\tau)^{\frac{t}{d}} < \frac{t}{\lambda d} + 1.
\end{align}
We note that if $t \le d$, the result of the size of the core set in Lemma~\ref{lem:coreset_size} automatically holds. Thus, we only consider the situation here $t > d$. In this case, the condition~\eqref{eq:t_condition} implies
\begin{align}
    \frac{t}{d}\log(1+\tau) < \log(1+\frac{t}{\lambda d}) < \log(\frac{t}{d}(1+\frac{1}{\lambda})) &= \log(\frac{t}{d}) + \log(1+\frac{1}{\lambda}) \nonumber \\
    &= \log\left(\frac{t\tau}{d(1+\tau)}\right) + \log(\frac{1+\tau}{\tau}) + \log(1+\frac{1}{\lambda}). \label{eq:t_condition_2}
\end{align}
Using the fact that for any $x > 0$, $\log(1+x) > \frac{x}{1+x}$, and that for any $x>0$, $\log(x) \le \frac{x}{e}$, we obtain
\begin{align}\label{eq:t_condition_3}
  \frac{t\tau}{d(1+\tau)} <  \frac{t\tau}{ed(1+\tau)} +\log(\frac{1+\tau}{\tau}) + \log(1+\frac{1}{\lambda}),
\end{align}
which implies
\[
t < \frac{e}{e-1}\frac{1+\tau}{\tau} d \left(\log(1+\frac{1}{\tau}) + \log(1+\frac{1}{\lambda})\right).
\]

\section{Proof of Theorem~\ref{thm:main_rand}}\label{apx:main_rand_v2}

In this proof, we say that we start a new \emph{loop} whenever we start (or restart) the policy iteration process, i.e., going to line $(*)$ in Algorithm~\ref{alg:main}. In each loop, we have at most $K$ iterations of policy iteration steps. By definition, we also know that when we start a new loop, the size of the core set $\cC$ increases by $1$ compared with the previous loop. We first introduce the notion of virtual policy iteration algorithm. This virtual algorithm is designed to leverage the existing results on approximate policy iteration with $\ell_\infty$ bounded error in the approximate Q-functions~\citep{munos2003error,farahmand2010error}. We first present the details of the virtual algorithm, and then provide performance guarantees for the main algorithm.

\subsection{Virtual approximate policy iteration with coupling}\label{apx:virtual_api_coupling}

The virtual policy iteration algorithm is a virtual algorithm that we use for the purpose of proof. It is a version of approximate policy iteration (API) with a simulator. An important factor is that the simulators of the virtual algorithm and the main algorithm need to be \emph{coupled}, which we explain in this section. 

The virtual algorithm is defined as follows. Unlike the main algorithm, the virtual algorithm runs exactly $C_{\max}$ loops, where $C_{\max}$ is the upper bound for the size of the core set defined in Lemma~\ref{lem:coreset_size}. 
In the virtual algorithm, we let the initial policy be the same as the main algorithm, i.e., $\tilde\pi_0=\pi_0$. Unlike the main algorithm, the virtual algorithm runs exactly $K$ iterations of policy iteration.
In the $k$-th iteration ($k\ge 1$), the virtual algorithm runs rollouts from each element in the core set $\cC$ (we will discuss how the virtual algorithm constructs the core set later) with $\tilde\pi_{k-1}$ with a simulator where $\tilde\pi_{k-1}$ is in the form of Eq.~\eqref{def:virtual_policy} ($\tilde{Q}_{k-1}$ will be defined once we present the details of the virtual algorithm).

We now describe the rollout process of the virtual algorithm. We still use a subroutine similar to \rolloutname. The simulator of the virtual algorithm can still generate samples of next state given a state-action pair according to the probability transition kernel $P$. The major difference from the main algorithm is that during the rollout process, when we find a state-action pair whose feature is outside of the good set $\cH$ (defined in Definition~\ref{def:confident_set}), i.e., $(s, a)$ such that $\phi(s, a)^\top (\Phi_{\cC}^\top\Phi_{\cC}+\lambda I)\phi(s, a) > \tau$, we do not terminate the subroutine, instead we record this state-action pair along with its feature (we call it the \emph{recorded element}), and then keep running the rollout process using $\tilde{\pi}_{k-1}$. Two situations can occur at the end of each loop: 1) We did not record any element, in which case we use the same core set $\cC$ in the next loop, and 2) We have at least one recorded element in a particular loop, in which case we add the \emph{first} element to the core set and discard any other recorded elements. In other words, in each loop of the virtual algorithm, we find the first state-action pair (if any) whose feature is outside of the good set and add this pair to the core set. Another difference from the main algorithm is that in the virtual algorithm, we do not end the rollout subroutine when we identify an uncertain state-action pair, and as a result, the rollout subroutine in the virtual algorithm \emph{always returns} an estimation of the Q-function.


We now proceed to present the virtual policy iteration process. In the $k$-th iteration, the virtual algorithm runs $m$ trajectories of $n$-step rollout using the policy $\tilde{\pi}_{k-1}$ from each element $z\in\cC$, obtains the empirical average of the discounted return $z_q$ in the same way as in Algorithm~\ref{alg:rollout}. Then we concatenate them, obtain the vector $\tilde{q}_{\cC}$, and compute
\begin{align}\label{eq:linear_est_rand}
    \tilde{w}_k = (\Phi_{\cC}^\top \Phi_{\cC} + \lambda I)^{-1}\Phi_{\cC}^\top \tilde{q}_{\cC}.
\end{align}
We use the notion of good set $\cH$ defined in Definition~\ref{def:confident_set}, and define the \emph{virtual Q-function} as follows:
\begin{align}\label{eq:virtual_q_rand}
    \tilde{Q}_{k-1}(s, a) :=
    \begin{cases}
    \tilde{w}_{k}^\top \phi(s, a), & \phi(s, a) \in \cH, \\
    Q_{\tilde\pi_{k-1}}(s, a),  & \phi(s, a)  \notin \cH,
    \end{cases}
\end{align}
by assuming the access to the true Q-function $Q_{\tilde\pi_{k-1}}(s, a)$. The next policy $\tilde{\pi}_k$ is defined as the greedy policy with respect to $\tilde{Q}_{k-1}(s, a)$, i.e.,
\begin{equation}\label{def:virtual_policy}
 \tilde\pi_k(a | s) = \ind \left(a = \arg\max_{a'\in\cA} \tilde{Q}_{k-1}(s, a'))\right).    
\end{equation}
Recall that for the main algorithm, once we learn the parameter vector $w_k$, the next policy $\pi_k$ is greedy with respect to the linear function $w_k^\top\phi(s, a)$, i.e.,
\[
    \pi_k(a | s) = \ind \left(a = \arg\max_{a'\in\cA} w_k^\top\phi(s, a'))\right).
\]
For comparison, the key difference is that when we observe a feature vector $\phi(s, a)$ that is not in the good set $\cH$, our actual algorithm terminates the rollout and returns the state-action pair with the new direction, whereas the virtual algorithm uses the true Q-function of the state-action pair.

\paragraph*{Coupling} The major remaining issue now is how the main algorithm is connected to the virtual algorithm. We describe this connection with a coupling argument.
In a particular loop, for any positive integer $N$, when the virtual algorithm makes its $N$-th query in the $k$-th iteration to the virtual simulator with a state-action pair, say $(s_{\text{virtual}}, a_{\text{virtual}})$, if the main algorithm has not returned due to encountering an uncertain state-action pair, we assume that at the same time the main algorithm also makes its $N$-th query to the simulator, with a state-action pair, say $(s_{\text{main}}, a_{
\text{main}})$. We let the two simulators be \emph{coupled}: When they are queried with the same pair, i.e., $(s_{\text{main}}, a_{\text{main}}) = (s_{\text{virtual}}, a_{
\text{virtual}})$, the next states that they return are also the same. In other words, the simulator for the main algorithm samples $s_{\text{main}}'\sim P(\cdot | s_{\text{main}}, a_{\text{main}})$, and the virtual algorithm samples $s_{\text{virtual}}'\sim P(\cdot| s_{\text{virtual}}, a_{
\text{virtual}})$, and $s_{\text{main}}'$ and $s_{\text{virtual}}'$ satisfy the joint distribution such that $\Prob{s_{\text{main}}' = s_{\text{virtual}}'}=1$. In the cases where $(s_{\text{main}}, a_{\text{main}}) \neq (s_{\text{virtual}}, a_{
\text{virtual}})$ or the main algorithm has already returned due to the discovery of a new feature direction, the virtual algorithm samples from $P$ independently from the main algorithm. Note that this setup guarantees that both the virtual algorithm and the main algorithm have valid simulators which can sample from the same probability transition kernel $P$.

There are a few direct consequences of this coupling design. First, since the virtual and main algorithms start with the same initial core set elements (constructed using the initial state), we know that in any loop, when starting from the same core set element $z$, both algorithms will have \emph{exactly the same rollout trajectories} until the main algorithm identifies an uncertain state-action pair and returns. This is due to the coupling of the simulators and the fact that within the good set $\cH$, the policies for the main algorithm and the virtual algorithm take the same action. Later, we will discuss this point more in Lemma~\ref{lem:virtual_api_rand}. Second, the core set elements that the virtual and main algorithms use are exactly the same for any loop. This is because when the main algorithm identifies an uncertain state-action pair, it adds it to the core set and start a new loop, and the virtual algorithm also only adds the \emph{first recorded element} to the core set. Since the simulators are the coupled, the first uncertain state-action pair that they encounter will be the same, meaning that both algorithms always add the same element to the core set, until the main algorithm finishes its final loop. We note that the core set elements on our algorithm are stored as ordered list so the virtual and main algorithm always run rollouts with the same ordering of the core set elements. Another observation is that while the virtual algorithm has a deterministic number of loops $C_{\max}$, the total number of loops that the main algorithms may run is a random variable whose value cannot exceed $C_{\max}$.

The next steps of the proof are the following:
\begin{itemize}
    \item We show that in each loop, with high probability, the virtual algorithm proceeds as an approximate policy iteration algorithm with a bounded $\ell_\infty$ error in the approximate Q-function. Thus the virtual algorithm produces a good policy at the end of each loop. Then, since by Lemma~\ref{lem:coreset_size}, we have at most
    \begin{align}\label{eq:cmax_def}
    C_{\max}:=\frac{e}{e-1}\frac{1+\tau}{\tau} d \left(\log(1+\frac{1}{\tau}) + \log(1+\frac{1}{\lambda})\right)
    \end{align}
    loops, with a union bound, we know that with high probability, the virtual algorithm produces a good policy in all the loops.
    
    \item We show that due to the coupling argument, the output parameter vector in the main and the virtual algorithms, i.e., $w_{K-1}$ and $\tilde{w}_{K-1}$ in the final loop are the same. This leads to the conclusion that with the same initial state $\rho$, the value of the outputs of the main algorithm and the virtual algorithm are close, and thus the main algorithm also outputs a good policy.
\end{itemize}
We prove these two points in Sections~\ref{apx:virtual} and~\ref{apx:main}, respectively.

\subsection{Analysis of the virtual algorithm}\label{apx:virtual}

Throughout this section, we will consider a fixed loop of the virtual algorithm, say the $\ell$-th loop. We assume that at the beginning of this loop, the virtual algorithm has a core set $\cC_\ell$. Notice that $\cC_\ell$ is a random variable that only depends on the randomness of the first $\ell-1$ loops. In this section, we will first condition on the randomness of all the first $\ell-1$ loops and only consider the randomness of the $\ell$-th loop. Thus we will first treat $\cC_\ell$ as a deterministic quantity. For simplicity, we write $\cC:=\cC_\ell$.

Consider the $k$-th iteration of a particular loop of the virtual algorithm with core set $\cC$. We would like to bound $\|\tilde{Q}_{k-1} - Q_{\tilde\pi_{k-1}}\|_\infty$. First, we have the following lemma for the accuracy of the Q-function for any element in the core set. To simplify notation, in this lemma, we omit the subscript and use $\pi$ to denote a policy that we run rollout with in an arbitrary iteration of the virtual algorithm.
\begin{lemma}\label{lem:q_est_rand}
Let $\pi$ be a policy that we run rollout with in an iteration of the virtual algorithm.
Then, for any element $z\in\cC$ and any $\theta>0$, we have with probability at least $1-2\exp(-2\theta^2(1-\gamma)^2m)$,
\begin{align}\label{eq:est_err_true_q}
|z_q - Q_\pi(z_s, z_a)| \le \frac{\gamma^{n+1}}{1-\gamma} + \theta.
\end{align}
\end{lemma}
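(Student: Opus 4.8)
The plan is to decompose the error $|z_q - Q_\pi(z_s, z_a)|$ into a bias term coming from truncating the rollout at length $n$ and a variance term coming from averaging only $m$ independent trajectories. Recall that $z_q = \frac{1}{m}\sum_{i=1}^m \sum_{t=0}^n \gamma^t r_{i,t}$, where each trajectory $(s_{i,0}=z_s, a_{i,0}=z_a, s_{i,1}, a_{i,1}, \ldots)$ is generated by following $\pi$ from $(z_s, z_a)$ using independent calls to the simulator. Note that because we are in the virtual algorithm, the rollout subroutine never terminates early, so each trajectory indeed runs the full $n$ steps and $z_q$ is a genuine $n$-step truncated Monte Carlo estimate.

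First I would handle the bias. Let $\hat Q_n(z_s,z_a) := \E\big[\sum_{t=0}^n \gamma^t r(s_t,a_t) \mid s_0=z_s, a_0=z_a\big]$ be the expectation of a single truncated rollout under $\pi$. Since rewards lie in $[0,1]$, the discarded tail satisfies $0 \le Q_\pi(z_s,z_a) - \hat Q_n(z_s,z_a) = \E\big[\sum_{t=n+1}^\infty \gamma^t r(s_t,a_t)\big] \le \sum_{t=n+1}^\infty \gamma^t = \frac{\gamma^{n+1}}{1-\gamma}$. This gives $|\hat Q_n(z_s,z_a) - Q_\pi(z_s,z_a)| \le \frac{\gamma^{n+1}}{1-\gamma}$, which is exactly the first term in the bound.

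Next I would handle the variance via a concentration inequality. The $m$ quantities $X_i := \sum_{t=0}^n \gamma^t r_{i,t}$ are i.i.d., each bounded in $[0, \sum_{t=0}^n \gamma^t] \subseteq [0, \frac{1}{1-\gamma}]$, with common mean $\hat Q_n(z_s,z_a)$. By Hoeffding's inequality, $\Prob{|\frac{1}{m}\sum_{i=1}^m X_i - \hat Q_n(z_s,z_a)| > \theta} \le 2\exp\big(-\frac{2\theta^2 m}{(1/(1-\gamma))^2}\big) = 2\exp(-2\theta^2(1-\gamma)^2 m)$. Combining this with the bias bound via the triangle inequality yields, with probability at least $1 - 2\exp(-2\theta^2(1-\gamma)^2 m)$,
\[
|z_q - Q_\pi(z_s,z_a)| \le |z_q - \hat Q_n(z_s,z_a)| + |\hat Q_n(z_s,z_a) - Q_\pi(z_s,z_a)| \le \theta + \frac{\gamma^{n+1}}{1-\gamma},
\]
as claimed. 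This argument is essentially routine; the only point requiring a little care is confirming that in the virtual algorithm the trajectories are genuinely i.i.d.\ full-length rollouts (the early-termination branch of \rolloutname\ is disabled there, and each trajectory uses fresh simulator randomness), so that Hoeffding applies cleanly with the stated range $\frac{1}{1-\gamma}$. There is no substantial obstacle here; the lemma is a building block, and the real difficulty of the overall proof lies in the subsequent steps (propagating this per-core-set-element accuracy to an $\ell_\infty$ bound on $\tilde Q_{k-1} - Q_{\tilde\pi_{k-1}}$ over the good set, and then invoking approximate policy iteration guarantees together with the coupling argument).
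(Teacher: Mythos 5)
Your proof is correct and matches the paper's argument exactly: both decompose the error into the truncation bias $\gamma^{n+1}/(1-\gamma)$ of the $n$-step truncated Q-function and a Hoeffding deviation bound for the average of $m$ i.i.d.\ returns bounded in $[0,(1-\gamma)^{-1}]$. No differences worth noting.
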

\begin{proof}
By the definition of $Q_\pi(z_s, z_a)$:
\[
Q_\pi(z_s, z_a) = \E_{s_{t+1}\sim P(\cdot | s_t, a_t), a_{t+1} \sim \pi(\cdot | s_{t+1})} \left[ \sum_{t=0}^\infty \gamma^t r(s_t, a_t) \mid s_0=z_s, a_0=z_a \right],
\]
and define the $n$-step truncated Q-function:
\[
Q_\pi^n(z_s, z_a) = \E_{s_{t+1}\sim P(\cdot | s_t, a_t), a_{t+1} \sim \pi(\cdot | s_{t+1})} \left[ \sum_{t=0}^n \gamma^t r(s_t, a_t) \mid s_0=z_s, a_0=z_a \right].
\]
Then we have $|Q_\pi^n(s, a) - Q_\pi(s, a)| \le \frac{\gamma^{n+1}}{1-\gamma}$.
Moreover, the Q-function estimate $z_q$ is an average of $m$ independent and unbiased estimates of $Q_\pi^n(s, a)$, which are all bounded in $[0, 1/(1-\gamma)]$. By Hoeffding's inequality we have with probability at least $1-2\exp(-2\theta^2(1-\gamma)^2m)$, $
|z_q - Q_\pi^n(s, a)| \le \theta$, which completes the proof.
\end{proof}

By a union bound over the $|\cC|$ elements in the core set, we know that 
\begin{align}\label{eq:est_error_all_c}
    \Prob{\forall~z\in\cC, |z_q - Q_{\tilde\pi_{k-1}}(z_s, z_a)| \le \frac{\gamma^{n+1}}{1-\gamma} + \theta} \ge 1-2C_{\max}\exp(-2\theta^2(1-\gamma)^2m).
\end{align}
The following lemma provides a bound on $|\tilde{Q}_{k-1}(s, a) - Q_{\tilde\pi_{k-1}}(s, a)|$, $\forall~(s, a)$ such that $\phi(s, a) \in \cH$.
\begin{lemma}\label{lem:est_error_rand}
Suppose that Assumption~\ref{asm:linear_realizability_prox} holds. Then, with probability at least 
\[
1-2C_{\max}\exp(-2\theta^2(1-\gamma)^2m),
\]
for any $(s, a)$ pair such that $\phi(s, a) \in \cH$, we have
\begin{align}\label{eq:q_gap_virtual_rand}
|\tilde{Q}_{k-1}(s, a) - Q_{\tilde\pi_{k-1}}(s, a)| \le b\sqrt{\lambda \tau} +  \big(\epsilon + \frac{\gamma^{n+1}}{1-\gamma} + \theta\big) \sqrt{\tau C_{\max}}+\epsilon:=\eta.
\end{align}
\end{lemma}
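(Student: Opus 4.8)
The plan is to reduce the claim to a standard ridge-regression error decomposition. Fix a loop, so that (as in the surrounding text) after conditioning on the randomness of the previous loops the core set $\cC$ and the matrix $\Sigma_{\cC} := \Phi_{\cC}^\top\Phi_{\cC} + \lambda I$ are deterministic, and fix an iteration $k$. By Assumption~\ref{asm:linear_realizability_prox} applied to the policy $\tilde\pi_{k-1}$, there is a vector $w^\star := w_{\tilde\pi_{k-1}}$ with $\|w^\star\|_2 \le b$ and $|Q_{\tilde\pi_{k-1}}(s,a) - \phi(s,a)^\top w^\star| \le \epsilon$ for all $(s,a)$. For a pair $(s,a)$ with $\phi := \phi(s,a) \in \cH$, i.e.\ $\phi^\top\Sigma_{\cC}^{-1}\phi \le \tau$, the triangle inequality gives $|\tilde Q_{k-1}(s,a) - Q_{\tilde\pi_{k-1}}(s,a)| \le |\phi^\top(\tilde w_k - w^\star)| + \epsilon$, so it remains to bound $|\phi^\top(\tilde w_k - w^\star)|$ by $b\sqrt{\lambda\tau} + (\epsilon + \gamma^{n+1}/(1-\gamma) + \theta)\sqrt{\tau C_{\max}}$.

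\textbf{Bias--variance split.} Using $\tilde w_k = \Sigma_{\cC}^{-1}\Phi_{\cC}^\top\tilde q_{\cC}$ (Eq.~\eqref{eq:linear_est_rand}) and $\Sigma_{\cC} w^\star = \Phi_{\cC}^\top\Phi_{\cC} w^\star + \lambda w^\star$, I would write
\[
\tilde w_k - w^\star = \Sigma_{\cC}^{-1}\Phi_{\cC}^\top\big(\tilde q_{\cC} - \Phi_{\cC} w^\star\big) - \lambda\,\Sigma_{\cC}^{-1} w^\star .
\]
The second (bias) term contributes $|\lambda\,\phi^\top\Sigma_{\cC}^{-1} w^\star| \le \lambda\sqrt{\phi^\top\Sigma_{\cC}^{-1}\phi}\,\sqrt{w^{\star\top}\Sigma_{\cC}^{-1}w^\star} \le \sqrt{\tau}\cdot\lambda\cdot b/\sqrt{\lambda} = b\sqrt{\lambda\tau}$, using Cauchy--Schwarz in the $\Sigma_{\cC}^{-1}$ inner product together with $\Sigma_{\cC}^{-1} \preceq \lambda^{-1}I$. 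For the first (variance) term, set $\xi := \tilde q_{\cC} - \Phi_{\cC} w^\star \in \R^{|\cC|}$; Cauchy--Schwarz again gives
\[
\big|\phi^\top\Sigma_{\cC}^{-1}\Phi_{\cC}^\top\xi\big| \le \sqrt{\phi^\top\Sigma_{\cC}^{-1}\phi}\;\sqrt{\xi^\top\Phi_{\cC}\Sigma_{\cC}^{-1}\Phi_{\cC}^\top\xi} \le \sqrt{\tau}\,\|\xi\|_2 ,
\]
since $\Phi_{\cC}\Sigma_{\cC}^{-1}\Phi_{\cC}^\top \preceq I$ (its eigenvalues are $\sigma_i^2/(\sigma_i^2+\lambda) \in [0,1)$, where the $\sigma_i$ are the singular values of $\Phi_{\cC}$).

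\textbf{Bounding $\|\xi\|_2$.} The $z$-th coordinate of $\xi$ is $z_q - \phi(z_s,z_a)^\top w^\star$; inserting $\pm Q_{\tilde\pi_{k-1}}(z_s,z_a)$ and using approximate realizability together with the event of Eq.~\eqref{eq:est_error_all_c} --- which holds with probability at least $1 - 2C_{\max}\exp(-2\theta^2(1-\gamma)^2 m)$ --- yields $|\xi_z| \le \epsilon + \gamma^{n+1}/(1-\gamma) + \theta$ simultaneously for all $z\in\cC$. Hence $\|\xi\|_2 \le \sqrt{|\cC|}\,(\epsilon + \gamma^{n+1}/(1-\gamma) + \theta) \le \sqrt{C_{\max}}\,(\epsilon + \gamma^{n+1}/(1-\gamma) + \theta)$ by Lemma~\ref{lem:coreset_size}. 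Adding the bias term, the $\sqrt\tau$ factor from the variance term, and the $+\epsilon$ from the realizability error at $(s,a)$ itself reproduces exactly the bound $\eta$ in the statement.

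\textbf{Main obstacle.} The linear algebra above is routine; the one delicate point is the conditioning needed before invoking Lemma~\ref{lem:q_est_rand}. The policy $\tilde\pi_{k-1}$, and hence $w^\star$, is random --- it depends on the rollouts of iterations $1,\dots,k-1$ of the current loop --- so one must condition on the $\sigma$-algebra generated by $\cC$ and by all rollouts strictly before iteration $k$. Conditionally on this, each $z_q$ is an average of $m$ i.i.d.\ returns bounded in $[0,(1-\gamma)^{-1}]$ with mean $Q^n_{\tilde\pi_{k-1}}(z_s,z_a)$, so Hoeffding and a union bound over $|\cC|\le C_{\max}$ elements apply as in Lemma~\ref{lem:q_est_rand}; since the resulting failure probability does not depend on the conditioning, the bound holds unconditionally as stated. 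A secondary, purely cosmetic point is that the $\sqrt{C_{\max}} = \tilde{\cO}(\sqrt d)$ factor --- the only place the ambient dimension enters --- comes from the crude estimate $\|\xi\|_2 \le \sqrt{|\cC|}\max_z|\xi_z|$.
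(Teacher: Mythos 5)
Your proposal is correct and follows essentially the same route as the paper's proof in Appendix~\ref{apx:est_error_rand}: the same triangle inequality peeling off one $\epsilon$, the same bias/noise decomposition of $\tilde w_k - w_{\tilde\pi_{k-1}}$ (the paper writes the bias term as $\phi^\top(I-\Sigma_{\cC}^{-1}\Phi_{\cC}^\top\Phi_{\cC})w_\pi$, which equals your $\lambda\phi^\top\Sigma_{\cC}^{-1}w_\pi$), the same $\ell_\infty$ bound on $\xi$ from the Hoeffding event, and the same $\sqrt{|\cC|}\le\sqrt{C_{\max}}$ loss. The only cosmetic difference is that you bound both terms via Cauchy--Schwarz in the $\Sigma_{\cC}^{-1}$ inner product while the paper uses an explicit eigendecomposition and a H\"older $\ell_1$--$\ell_\infty$ step; your remark on conditioning before invoking Lemma~\ref{lem:q_est_rand} matches how the paper handles it in the surrounding text of Appendix~\ref{apx:virtual}.
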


We prove this lemma in Appendix~\ref{apx:est_error_rand}. Since when $\phi(s, a)\notin \cH$, $\tilde{Q}_{k-1}(s, a) = Q_{\tilde{\pi}_{k-1}}(s, a)$, we know that $\|\tilde{Q}_{k-1}(s, a) - Q_{\tilde{\pi}_{k-1}}(s, a)\|_\infty \le \eta$. With another union bound over the $K$ iterations, we know that with probability at least
\[
1-2KC_{\max}\exp(-2\theta^2(1-\gamma)^2m),
\]
the virtual algorithm is an approximate policy iteration algorithm with $\ell_\infty$ bound $\eta$ for the approximation error on the Q-functions. We use the following results for API, which is a direct consequence of the results in~\citet{munos2003error,farahmand2010error}, and is also stated in~\citet{lattimore2020learning}. 
\begin{lemma}\label{lem:api_rand}
Suppose that we run $K$ approximate policy iterations and generate a sequence of policies $\pi_0, \pi_1, \ldots, \pi_K$. 
Suppose that for every $k=1,2,\ldots, K$, in the $k$-th iteration, we obtain a function $\tilde{Q}_{k-1}$ such that, $\|\tilde{Q}_{k-1} - Q_{\pi_{k-1}}\|_\infty\le \eta$, and choose $\pi_k$ to be greedy with respect to $\tilde{Q}_{k-1}$. Then
\[
\|Q^* - Q_{\pi_{K}}\|_\infty \le \frac{2\eta}{1-\gamma} + \frac{\gamma^K}{1-\gamma}.
\]
\end{lemma}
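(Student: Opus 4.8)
The plan is to reduce the statement to a one-step contraction for $a_k := \|Q^* - Q_{\pi_k}\|_\infty$ and then unroll it. I would work with the Bellman operators on $Q$-functions: for a policy $\pi$ write $(\mathcal{T}_\pi Q)(s,a) = r(s,a) + \gamma\E_{s'\sim P(\cdot|s,a)}[Q(s',\pi(s'))]$, and let $\mathcal{T}$ be the optimality operator $(\mathcal{T}Q)(s,a) = r(s,a) + \gamma\E_{s'}[\max_{a'}Q(s',a')]$. Both are monotone $\gamma$-contractions in $\|\cdot\|_\infty$, with fixed points $Q_\pi$ and $Q^*$; since $r\in[0,1]$, all $Q$-values lie in $[0,(1-\gamma)^{-1}]$, so $a_0 = \|Q^* - Q_{\pi_0}\|_\infty \le (1-\gamma)^{-1}$. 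Because $\pi_k$ is greedy with respect to $\tilde Q_{k-1}$ we have $\mathcal{T}_{\pi_k}\tilde Q_{k-1} = \mathcal{T}\tilde Q_{k-1}$.

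First I would convert the evaluation-error hypothesis into an approximate-greedy statement about the \emph{true} $Q_{\pi_{k-1}}$. Applying $\|\tilde Q_{k-1} - Q_{\pi_{k-1}}\|_\infty \le \eta$ on both sides of the greedy identity $\tilde Q_{k-1}(s,\pi_k(s)) = \max_a \tilde Q_{k-1}(s,a)$ gives, for every $s$, $Q_{\pi_{k-1}}(s,\pi_k(s)) \ge \max_a Q_{\pi_{k-1}}(s,a) - 2\eta$. Pushing this through one Bellman backup yields the pointwise bound $\mathcal{T}_{\pi_k}Q_{\pi_{k-1}} \ge \mathcal{T}Q_{\pi_{k-1}} - 2\gamma\eta\,\mathbf 1$. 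Combined with $Q^* = \mathcal{T}Q^*$ and the monotone $\gamma$-contraction of $\mathcal{T}$ (using $Q^*\ge Q_{\pi_{k-1}}$), this produces the one-step optimality comparison $Q^* - \mathcal{T}_{\pi_k}Q_{\pi_{k-1}} \le \gamma\,a_{k-1}\,\mathbf 1 + 2\gamma\eta\,\mathbf 1$.

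The second ingredient is a policy-improvement (lag) bound $Q_{\pi_k} \ge \mathcal{T}_{\pi_k}Q_{\pi_{k-1}}$, exactly as in exact policy iteration: the same approximate-greedy inequality gives $\mathcal{T}_{\pi_k}Q_{\pi_{k-1}} \ge Q_{\pi_{k-1}}$ up to the evaluation error, and iterating the monotone operator $\mathcal{T}_{\pi_k}$ (whose fixed point is $Q_{\pi_k}$) transfers this lower bound to $Q_{\pi_k}$. Granting it, $Q^* - Q_{\pi_k} \le Q^* - \mathcal{T}_{\pi_k}Q_{\pi_{k-1}}$, and with the previous display and $2\gamma\eta \le 2\eta$ I obtain the recursion $a_k \le \gamma a_{k-1} + 2\eta$. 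Unrolling over $k=1,\dots,K$ then gives $a_K \le \gamma^K a_0 + 2\eta\sum_{j=0}^{K-1}\gamma^j \le \frac{\gamma^K}{1-\gamma} + \frac{2\eta}{1-\gamma}$, which is the claim.

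I expect the main obstacle to be precisely this lag step. With nonzero evaluation error the approximate improvement only reads $\mathcal{T}_{\pi_k}Q_{\pi_{k-1}} \ge Q_{\pi_{k-1}} - 2\gamma\eta\,\mathbf 1$, and iterating a possibly adverse one-step gap naively costs a spurious factor $(1-\gamma)^{-1}$, degrading the error term to $O(\eta/(1-\gamma)^2)$. The delicate point — and what the cited approximate-policy-iteration analyses of \citet{munos2003error} and \citet{farahmand2010error} supply — is to account for the improvement gap so that the per-iteration evaluation error enters additively as $2\eta$ while the contraction rate stays $\gamma$; this is exactly what keeps the final error at $\frac{2\eta}{1-\gamma}$ rather than $\frac{2\eta}{(1-\gamma)^2}$. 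I would therefore invoke their bound directly for this step (or carry out the careful bookkeeping of the improvement gap), the surrounding contraction and unrolling being routine.
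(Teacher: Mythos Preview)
The paper does not actually prove this lemma: it is stated as a known fact and attributed to \citet{munos2003error,farahmand2010error} (and noted to appear in \citet{lattimore2020learning}). Your proposal is therefore consistent with the paper's treatment --- you set up the Bellman-operator framework, reduce the claim to the one-step recursion $a_k \le \gamma a_{k-1} + 2\eta$, correctly isolate the delicate ``lag'' step (showing the evaluation error enters as $2\eta$ per iteration rather than $2\eta/(1-\gamma)$), and then, exactly as the paper does, defer to the cited approximate-policy-iteration analyses for that step. Your write-up is in fact more detailed than what the paper provides, since the paper offers no surrounding scaffolding at all.

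One caveat worth flagging: the inequality $Q_{\pi_k}\ge \mathcal{T}_{\pi_k}Q_{\pi_{k-1}}$ you invoke as the ``policy-improvement (lag) bound'' is a genuine property of \emph{exact} policy iteration but, as you yourself note, does not survive the $\eta$-perturbation without care. So your outline is honest about where the nontrivial work lies, and your decision to cite \citet{munos2003error,farahmand2010error} for it matches the paper exactly.
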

According to Lemma~\ref{lem:api_rand},
\begin{equation}\label{eq:tilde_q_result}
\|Q^* - Q_{\tilde\pi_{K-2}}\|_\infty \le  \frac{2\eta}{1-\gamma} + \frac{\gamma^{K-2}}{1-\gamma}.
\end{equation}
Then, since $\|Q_{\tilde\pi_{K-2}} - \tilde{Q}_{K-2}\|_\infty \le \eta$, we know that
\begin{align}\label{eq:tilde_q_result_2}
    \|Q^* - \tilde{Q}_{K-2}\|_\infty \le \frac{3\eta}{1-\gamma} + \frac{\gamma^{K-2}}{1-\gamma}.
\end{align}
The following lemma translates the gap in Q-functions to the gap in value.
\begin{lemma}\label{lem:api_value}
\citep{singh1994upper} Let $\pi$ be greedy with respect to a function $Q$. Then for any state $s$,
\[
V^*(s) - V_\pi(s) \le \frac{2}{1-\gamma}\|Q^*-Q\|_\infty.
\]
\end{lemma}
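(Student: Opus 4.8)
The plan is to prove this by a self-bounding (fixed-point) argument on the quantity $\Delta := \|V^* - V_\pi\|_\infty$, together with the contraction coming from the discount factor. Write $\varepsilon := \|Q^* - Q\|_\infty$, fix an arbitrary state $s$, let $a := \pi(s)$ be the (deterministic) greedy action selected by $\pi$, and let $a^* := \pi^*(s)$ be an optimal action at $s$. The starting point is the three-term decomposition
\[
V^*(s) - V_\pi(s) \;=\; Q^*(s,a^*) - Q_\pi(s,a) \;=\; \big[Q^*(s,a^*) - Q(s,a^*)\big] \;+\; \big[Q(s,a^*) - Q(s,a)\big] \;+\; \big[Q(s,a) - Q_\pi(s,a)\big].
\]
The first bracket is at most $\varepsilon$ by the definition of $\varepsilon$, and the second bracket is at most $0$ because $a$ maximizes $Q(s,\cdot)$ by greediness of $\pi$.

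It remains to control the third bracket $Q(s,a) - Q_\pi(s,a)$. I would first insert $Q^*(s,a)$ to get $Q(s,a) - Q_\pi(s,a) \le \varepsilon + \big[Q^*(s,a) - Q_\pi(s,a)\big]$, and then use the Bellman equations $Q^*(s,a) = r(s,a) + \gamma\,\E_{s'\sim P(\cdot|s,a)}[V^*(s')]$ and $Q_\pi(s,a) = r(s,a) + \gamma\,\E_{s'\sim P(\cdot|s,a)}[V_\pi(s')]$, whose difference telescopes to $Q^*(s,a) - Q_\pi(s,a) = \gamma\,\E_{s'\sim P(\cdot|s,a)}[V^*(s') - V_\pi(s')] \le \gamma\Delta$. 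Hence the third bracket is at most $\varepsilon + \gamma\Delta$.

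Combining the three bounds gives $V^*(s) - V_\pi(s) \le 2\varepsilon + \gamma\Delta$ for every state $s$. Since the right-hand side is independent of $s$ and $V^*(s) \ge V_\pi(s)$ everywhere, taking the supremum over $s$ yields $\Delta \le 2\varepsilon + \gamma\Delta$, so $\Delta \le \tfrac{2\varepsilon}{1-\gamma}$, which is exactly the claimed inequality (in fact uniformly over $s$). The argument is entirely elementary; the only mild subtlety is that one should relate $Q$ to $Q_\pi$ by routing through $Q^*$ (a direct comparison is not available), and take the supremum over states only at the very end so that the factor $\gamma$ can be absorbed as a contraction. I do not anticipate any real obstacle here.
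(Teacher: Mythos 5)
Your proof is correct. The paper does not actually prove this lemma---it is quoted from \citet{singh1994upper} without proof---but your argument is precisely the standard one behind that result: the three-term decomposition at the greedy action, the sign of the middle term from greediness, routing the comparison of $Q$ with $Q_\pi$ through $Q^*$ via the Bellman equations, and closing the self-bounding inequality $\Delta \le 2\varepsilon + \gamma\Delta$ (which is legitimate here since $\Delta \le (1-\gamma)^{-1} < \infty$ for bounded rewards). No gaps.
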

Since $\tilde{\pi}_{K-1}$ is greedy with respect to $\tilde{Q}_{K-2}$, we know that
\begin{align}\label{eq:gap_virtual_rand}
  V^*(\rho) - V_{\tilde{\pi}_{K-1}}(\rho) \le \frac{6\eta}{(1-\gamma)^2} + \frac{2\gamma^{K-2}}{(1-\gamma)^2}.  
\end{align}
We notice that this result is obtained by conditioning on all the previous $\ell-1$ loops and only consider the randomness of the $\ell$-th loop. More specifically, given any core set $\cC_\ell$ at the beginning of the $\ell$-th loop, we have
\[
\Prob{V^*(\rho) - V_{\tilde{\pi}_{K-1}}(\rho) \le \frac{6\eta}{(1-\gamma)^2} + \frac{2\gamma^{K-2}}{(1-\gamma)^2} \mid \cC_{\ell}} \ge 1-2KC_{\max}\exp(-2\theta^2(1-\gamma)^2m).
\]
By law of total probability we have
\begin{align*}
&\Prob{V^*(\rho) - V_{\tilde{\pi}_{K-1}}(\rho) \le \frac{6\eta}{(1-\gamma)^2} + \frac{2\gamma^{K-2}}{(1-\gamma)^2}} \\
=& \sum_{\cC_{\ell}} \Prob{V^*(\rho) - V_{\tilde{\pi}_{K-1}}(\rho) \le \frac{6\eta}{(1-\gamma)^2} + \frac{2\gamma^{K-2}}{(1-\gamma)^2} \mid \cC_{\ell}}\Prob{\cC_{\ell}} \\
\ge & 1-2KC_{\max}\exp(-2\theta^2(1-\gamma)^2m)\sum_{\cC_{\ell}}\Prob{\cC_{\ell}} \\
=& 1-2KC_{\max}\exp(-2\theta^2(1-\gamma)^2m).
\end{align*}
With another union bound over the $C_{\max}$ loops of the virtual algorithm, we know that with probability at least
\begin{align}\label{eq:err_prob_rand}
    1-2KC^2_{\max}\exp(-2\theta^2(1-\gamma)^2m),
\end{align}
Eq.~\eqref{eq:gap_virtual_rand} holds for all the loops. We call this event $\cE_1$ in the following.

\subsection{Analysis of the main algorithm}\label{apx:main}

We now move to the analysis of the main algorithm. Throughout this section, when we mention the \emph{final loop}, we mean the final loop of the \emph{main algorithm}, which may not be the final loop of the virtual algorithm. We have the following result.
\begin{lemma}\label{lem:virtual_api_rand}
In the final loop of the main algorithm, all the rollout trajectories in the virtual algorithm are exactly the same as those in the main algorithm, and therefore $w_k = \tilde{w}_k$ for all $1\le k\le K$.
\end{lemma}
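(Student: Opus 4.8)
\textbf{Proof proposal for Lemma~\ref{lem:virtual_api_rand}.}

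The plan is to induct on the iteration index $k$ within the final loop, proving simultaneously that (i) the rollout trajectories generated by \rolloutname{} from every core-set element coincide between the main and virtual algorithms in iteration $k$, and (ii) the learned parameters satisfy $w_k = \tilde w_k$. The key structural fact, already established in the coupling discussion of Section~\ref{apx:virtual_api_coupling}, is that in the final loop the main algorithm never returns $\uncertain$; hence every state-action pair $(s,a)$ encountered during any rollout in the final loop has $\phi(s,a)\in\cH$. On this event, the greedy policy of the main algorithm, which acts according to $w_k^\top\phi(s,a)$, and the virtual policy $\tilde\pi_k$, which acts according to $\tilde Q_{k-1}(s,a)$, are the \emph{same} policy restricted to $\cH$, because for $\phi(s,a)\in\cH$ we have $\tilde Q_{k-1}(s,a)=\tilde w_k^\top\phi(s,a)$ and, by the inductive hypothesis, $\tilde w_k = w_k$.

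First, for the base case $k=1$: both algorithms start from $\tilde\pi_0=\pi_0$ and the \emph{same} core set $\cC$ (this is the second consequence of the coupling argument — the core-set elements and their ordering agree in every loop, in particular the final loop). Running \rolloutname{} from a fixed core-set element $z$ with the same policy $\pi_0$ and the coupled simulator, the trajectories agree step by step: whenever the two executions are at the same state $s_{i,t}$, they query the same action $a_{i,t}\sim\pi_0(\cdot\mid s_{i,t})$ (using shared randomness for the action draw, which we may assume as part of the coupling, or noting $\pi_0$ is fixed), and the coupled simulator returns the same reward and next state. Since the main algorithm does not trip the $\uncertain$ check in the final loop, the virtual algorithm — which would merely \emph{record} rather than terminate — records nothing new either, so it returns the identical Monte Carlo estimate $z_q$ for each $z$. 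Hence $q_\cC=\tilde q_\cC$, and since both $w_1$ and $\tilde w_1$ are the same ridge-regression map \eqref{eq:ridge_sol}/\eqref{eq:linear_est_rand} applied to the same $(\Phi_\cC, q_\cC)$, we get $w_1=\tilde w_1$.

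For the inductive step, assume $w_{k-1}=\tilde w_{k-1}$. Then $\pi_{k-1}$ and $\tilde\pi_{k-1}$ agree on all state-action pairs whose feature lies in $\cH$ — for the main algorithm this is immediate from the definition of the greedy policy, and for the virtual algorithm it follows because $\tilde Q_{k-2}(s,a)=\tilde w_{k-1}^\top\phi(s,a)=w_{k-1}^\top\phi(s,a)$ on $\cH$, and the $\arg\max$ over $a'\in\cA$ only involves features in $\cH$ along any final-loop rollout. Running \rolloutname{} in iteration $k$ from each $z\in\cC$ with these policies and the coupled simulator, an easy induction on the within-rollout time step $t$ shows the two trajectories coincide: the agreement persists as long as all visited features are in $\cH$, which holds throughout the final loop by assumption. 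Therefore the returns match, $z_q$ matches for every $z$, $q_\cC=\tilde q_\cC$, and applying the shared ridge map gives $w_k=\tilde w_k$. This closes the induction and proves the lemma.

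I expect the main subtlety — not a deep obstacle, but the point that needs care — to be the bookkeeping around the action sampling inside \rolloutname: the rollouts are stochastic not only in the next-state transitions but also in the action draws $a_{i,t}\sim\pi(\cdot\mid s_{i,t})$, so the coupling must be extended to share this randomness as well (or one argues that conditioned on identical state sequences and identical policies the action distributions are identical, which suffices for the estimates to have the same law and, under a shared-randomness coupling, the same realized value). The other point worth stating explicitly is \emph{why} the feature of every encountered pair stays in $\cH$ in the final loop: because the final loop is, by definition, the one after which the main algorithm terminates without ever invoking line~$(*)$ again, i.e.\ \rolloutname{} returns $\done$ every time, which by inspection of Algorithm~\ref{alg:rollout} is exactly the statement that the good-set test never fires.
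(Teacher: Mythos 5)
Your proposal is correct and follows essentially the same argument as the paper: an induction on the iteration index $k$, with the base case from $\pi_0=\tilde\pi_0$ and the coupled simulators, and the inductive step from the fact that $w_k=\tilde w_k$ forces $\pi_k$ and $\tilde\pi_k$ to agree wherever all action features lie in $\cH$, which holds along every final-loop trajectory since \rolloutname{} never returns $\uncertain$ there. Your extra remark about coupling the action draws is a reasonable point of care (the paper handles it explicitly only in the \textsc{Politex} analysis, where policies are stochastic), but it does not change the argument.
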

\begin{proof}
We notice that since we only consider the final loop, in any iteration, for any state $s$ in all the rollout trajectories in the main algorithm, and all action $a\in\cA$, $\phi(s, a)\in\cH$. In the first iteration, since $\pi_0=\tilde\pi_0$, and the simulators are coupled, we know that all the rollout trajectories are the same between the main algorithm and the virtual algorithm, and as a result, all the Q-function estimates are the same, and thus $w_1 = \tilde{w}_1$. If we have $w_k=\tilde{w}_k$, we know that by the definition in~\eqref{eq:virtual_q_rand}, the policies $\pi_k$ and $\tilde{\pi}_k$ always take the same action given $s$ if for all $a\in\cA$, $\phi(s, a)\in\cH$. Again using the fact that the simulators are coupled, the rollout trajectories by $\pi_k$ and $\tilde\pi_k$ are also the same between the main algorithm and the virtual algorithm, and thus $w_{k+1}=\tilde{w}_{k+1}$.
\end{proof}

Since $\|\phi(s, a)\|_2\le 1$ for all $s, a$, we can verify that if we set $\tau \ge 1$, then after adding a state-action pair $s, a$ to the core set, then its feature vector $\phi(s, a)$ stays in the good set $\cH$. Recall that in the core set initialization stage of Algorithm~\ref{alg:main}, if for an action $a\in\cA$, $\phi(\rho, a)$ is not in $\cH$, we add $\rho, a$ to $\cC$. Thus, after the core set initialization stage, we have $\phi(\rho, a)\in\cH$ for all $a$. Thus $\pi_{K-1}(\rho) = \tilde\pi_{K-1}(\rho):=a_\rho$. Moreover, according to Lemma~\ref{lem:est_error_rand}, we also know that when $\cE_1$ happens,
\begin{align}\label{eq:v_to_w_virtual}
    | V_{\tilde\pi_{K-1}}(\rho) - \tilde{w}_{K}^\top\phi(\rho, a_\rho)| = |Q_{\tilde\pi_{K-1}}(\rho, a_\rho) - \tilde{w}_{K}^\top\phi(\rho, a_\rho)| \le \eta.
\end{align}
In the following, we bound the difference of the values of the output policy of the main algorithm $\pi_{K-1}$ and the output policy of the virtual algorithm $\tilde\pi_{K-1}$ in the final loop of the main algorithm, i.e., $|V_{\pi_{K-1}}(\rho) - V_{\tilde\pi_{K-1}}(\rho)|$. To do this, we use another auxiliary virtual policy iteration algorithm, which we call \emph{virtual-2} in the following. Virtual-2 is similar to the virtual policy iteration algorithm in Appendix~\ref{apx:virtual_api_coupling}.
The simulator of virtual-2 is coupled with the virtual algorithm, and virtual-2 also uses the same initial policy $\hat\pi_0:=\pi_0$ as the main algorithm. Virtual-2 also uses Monte-Carlo rollouts with the simulator and obtains the estimated Q-function values $\hat{q}_{\cC}$, and the linear regression coefficients are computed in the same way as~\eqref{eq:linear_est_rand}, i.e., $\hat{w}_k = (\Phi_{\cC}^\top \Phi_{\cC} + \lambda I)^{-1}\Phi_{\cC}^\top \hat{q}_{\cC}$. The virtual-2 algorithm also conducts uncertainty check in the rollout subroutine. Similar to the virtual algorithm, when it identifies an uncertain state-action pair, it records the pair and keeps running the rollout process. At the end of each loop, the virtual-2 algorithm still adds the first recorded element to the core set and discard other recorded elements.
The \emph{only difference} is that in virtual-2, we choose the virtual Q-function to be $\hat{Q}_{k-1}(s, a):=\hat{w}_{k}^\top\phi(s, a)$ for all $(s, a)\in\cS\times\cA$. Using the same arguments in Appendix~\ref{apx:virtual}, we know that with probability at least $1-2KC_{\max}^2\exp(-2\theta^2(1-\gamma)^2m)$, for all the loops and all the policy iteration steps in every loop, we have $|\hat{Q}_{k-1}(s, a) - Q_{\hat\pi_{k-1}}(s, a)|\le \eta$ for all $(s, a)$ such that $\phi(s, a)\in\cH$. We call this event $\cE_2$.
Since the simulators of virtual-2 is also coupled with that of the main algorithm, by the same argument as in Lemma~\ref{lem:virtual_api_rand}, we know that in the last iteration of the final loop of the main algorithm, we have $\hat{\pi}_{K-1} = \pi_{K-1}$ and $\hat{w}_{K} = w_{K}$.
We also know that when event $\cE_2$ happens, in the last iteration of the all the loops of virtual-2,
\begin{align}\label{eq:v_to_w_virtual_2}
   |V_{\hat{\pi}_{K-1}}(\rho) - \hat{w}_K^\top\phi(\rho, a_\rho)| \le \eta.
\end{align}
Therefore, when both events $\cE_1$ and $\cE_2$ happen, combining~\eqref{eq:v_to_w_virtual} and~\eqref{eq:v_to_w_virtual_2}, and using the fact that $\tilde{w}_{K} = w_K=\hat{w}_K$, we know that
\begin{equation*}
    \begin{split}
         &|V_{\pi_{K-1}}(\rho) - V_{\tilde\pi_{K-1}}(\rho)| = |V_{\hat{\pi}_{K-1}}(\rho) - V_{\tilde\pi_{K-1}}(\rho)|\\
         \leq & |V_{\hat{\pi}_{K-1}}(\rho) - \hat{w}_K^{\top}\phi(\rho, a_{\rho})|+|\hat{w}_K^{\top}\phi(\rho, a_{\rho})-\tilde{w}_K^{\top}\phi(\rho, a_{\rho})| +|\tilde{w}_K^{\top}\phi(\rho, a_{\rho})-V_{\tilde\pi_{K-1}}(\rho)|\\
         \leq &\eta+0+\eta = 2\eta.
    \end{split}
\end{equation*}
Combining this fact with~\eqref{eq:gap_virtual_rand} and using union bound, we know that with probability at least
\begin{align}\label{eq:err_prob_rand_2}
   1-4KC_{\max}^2\exp(-2\theta^2(1-\gamma)^2m),
\end{align}
with $C_{\max}$ defined as in~\eqref{eq:cmax_def}, we have
\begin{align}\label{eq:final_v}
    V^*(\rho) - V_{{\pi}_{K-1}}(\rho) \le \frac{8\eta}{(1-\gamma)^2} + \frac{2\gamma^{K-2}}{(1-\gamma)^2}.
\end{align}

Finally, we choose the appropriate parameters. Note that we would like to ensure that the success probability in Eq.~\eqref{eq:err_prob_rand_2} is at least $1-\delta$ and at the same time, the sub-optimality (right hand side of Eq.~\eqref{eq:final_v}) to be as small as possible. Suppose that Assumption~\ref{asm:linear_realizability} holds, i.e, $\epsilon=0$ in~\eqref{eq:q_gap_virtual_rand}. It can be verified that by choosing $\tau=1$, $\lambda = \frac{\kappa^2(1-\gamma)^4}{1024 b^2}$, $n=\frac{3}{1-\gamma}\log(\frac{4(1+\log(1+\lambda^{-1})d)}{\kappa(1-\gamma)})$, $\theta = \frac{\kappa(1-\gamma)^2}{64 \sqrt{d(1+\log(1+\lambda^{-1}))}}$, $K=2+\frac{2}{1-\gamma}\log(\frac{3}{\kappa(1-\gamma)})$, $m=4096 \frac{d(1+\log(1+\lambda^{-1}))}{\kappa^2(1-\gamma)^{6}}\log(\frac{8Kd(1+\log(1+\lambda^{-1}))}{\delta})$, we can ensure that the error probability is at most $1-\delta$ and $V^*(\rho) - V_{{\pi}_{K-1}}(\rho) \le \kappa$. Suppose that Assumption~\ref{asm:linear_realizability_prox} holds. It can be verified that by choosing $\tau=1$, $\lambda = \frac{\epsilon^2 d}{b^2}$, $n=\frac{1}{1-\gamma}\log(\frac{1}{\epsilon(1-\gamma)})$, $\theta=\epsilon$, $K=2 + \frac{1}{1-\gamma} \log\big(\frac{1}{\epsilon\sqrt{d}}\big)$, $m=\frac{1}{\epsilon^2(1-\gamma)^2}\log(\frac{8Kd(1+\log(1+\lambda^{-1}))}{ \delta})$, we can ensure that with probability at least $1-\delta$, 
\[
V^*(\rho) - V_{{\pi}_{K-1}}(\rho) \le \frac{74\epsilon\sqrt{d}}{(1-\gamma)^2}(1+\log(1+\lambda^{-1})).
\]

\section{Proof of Lemma~\ref{lem:est_error_rand}}\label{apx:est_error_rand}
To simplify notation, we write $\pi:=\tilde\pi_{k-1}$, $\tilde{Q}(\cdot, \cdot) := \tilde{Q}_{k-1}(\cdot, \cdot)$, and $\tilde{w} = \tilde{w}_{k}$ in this proof. According to Eq.~\eqref{eq:est_error_all_c}, with probability at least $1-2C_{\max}\exp(-2\theta^2(1-\gamma)^2m)$,
\[
 |z_q - Q_{\pi}(z_s, z_a)| \le \frac{\gamma^{n+1}}{1-\gamma} + \theta
\]
holds for all $z\in\cC$. We condition on this event in the following derivation. Suppose that Assumption~\ref{asm:linear_realizability_prox} holds. We know that there exists $w_{\pi}\in\R^d$ with $\|w_{\pi}\|_2\le b$ such that for any $s, a$, 
\[
|Q_{\pi}(s, a) - w_{\pi}^\top \phi(s, a)|\le \epsilon.
\]
Let $\xi:=\tilde{q}_{\cC} - \Phi_{\cC}w_{\pi}$. Then we have
\begin{align}\label{eq:bound_xi_prox}
    \|\xi\|_\infty \le \epsilon + \frac{\gamma^{n+1}}{1-\gamma} + \theta.
\end{align}

Suppose that for a state-action pair $s, a$, the feature vector $\phi:=\phi(s, a)\in\cH$, with $\cH$ defined in Definition~\ref{def:confident_set}. Then we have
\begin{align}
    |\tilde{Q}(s, a) - Q_{\pi}(s, a)| & \le |\phi^\top \tilde{w}  - \phi^\top w_{\pi}| + \epsilon  \nonumber \\
    &= |\phi^\top(\Phi_{\cC}^\top \Phi_{\cC} + \lambda I)^{-1}\Phi_{\cC}^\top (\Phi_{\cC}w_{\pi} + \xi) - \phi^\top w_{\pi}| + \epsilon \nonumber \\
    &\le \underbrace{| \phi^\top \big(I- (\Phi_{\cC}^\top \Phi_{\cC} + \lambda I)^{-1}\Phi_{\cC}^\top\Phi_{\cC}\big)w_{\pi} |}_{E_1} + \underbrace{|\phi^\top(\Phi_{\cC}^\top \Phi_{\cC} + \lambda I)^{-1}\Phi_{\cC}^\top \xi|}_{E_2} + \epsilon. \label{eq:split_rand}
\end{align}
We then bound $E_1$ and $E_2$ in~\eqref{eq:split_rand}. Similar to Appendix~\ref{apx:coreset_size}, let $\Phi_{\cC}^\top \Phi_{\cC} + \lambda I := V\Lambda V^\top$ be the eigendecomposition of $\Phi_{\cC}^\top \Phi_{\cC} + \lambda I$ with $\Lambda=\diag(\lambda_1,\ldots, \lambda_d)$ and $V$ being an orthonormal matrix. Notice that for all $i$, $\lambda_i \ge \lambda$. Let $\alpha = V^\top \phi$. Then for $E_1$, we have
\begin{align}
    E_1 &= |\phi^\top V \big(I - \Lambda^{-1}(\Lambda - \lambda I) \big) V^\top w_{\pi}| =\lambda |\phi^\top V \Lambda^{-1} V^\top w_{\pi}|  \nonumber \\
    & \le \lambda b \|\alpha^\top \Lambda^{-1}\|_2 = \lambda b\sqrt{\sum_{i=1}^d \frac{\alpha_i^2}{\lambda_i^2}} \nonumber \\
    & \le b\sqrt{\lambda}\sqrt{\sum_{i=1}^d \frac{\alpha_i^2}{\lambda_i}}, \label{eq:bound_e1_1}
\end{align}
where for the first inequality we use Cauchy-Schwarz inequality and the assumption that $\| w_{\pi_{k-1}} \|_2\le b$, and for the second inequality we use the fact that $\lambda_i \ge \lambda$. On the other hand, since we know that $\phi\in\cH$, we know that $\alpha^\top \Lambda^{-1} \alpha \le \tau$, i.e., $\sum_{i=1}^d \alpha_i^2 \lambda_i^{-1} \le \tau$. Combining this fact with~\eqref{eq:bound_e1_1}, we obtain
\begin{align}\label{eq:bound_e1_2}
    E_1 \le b\sqrt{\lambda\tau}.
\end{align}

We now bound $E_2$. According to H\"{o}lder's inequality, we have
\begin{align}
    E_2 &\le \| \phi^\top(\Phi_{\cC}^\top \Phi_{\cC} + \lambda I)^{-1}\Phi_{\cC}^\top \|_1 \|\xi\|_\infty  \nonumber \\
    & \le \| \phi^\top(\Phi_{\cC}^\top \Phi_{\cC} + \lambda I)^{-1}\Phi_{\cC}^\top \|_2  \|\xi\|_\infty \sqrt{|\cC|}  \nonumber \\
    &=\big( \phi^\top(\Phi_{\cC}^\top \Phi_{\cC} + \lambda I)^{-1}\Phi_{\cC}^\top \Phi_{\cC} (\Phi_{\cC}^\top \Phi_{\cC} + \lambda I)^{-1} \phi \big)^{1/2} \|\xi\|_\infty \sqrt{|\cC|} \nonumber \\
    &= \big( \alpha^\top \Lambda^{-1}(\Lambda - \lambda I)\Lambda^{-1}\alpha \big)^{1/2} \|\xi \|_\infty \sqrt{|\cC|} \nonumber \\
    &=\sqrt{\sum_{i=1}^d \alpha_i^2 \frac{\lambda_i-\lambda}{\lambda_i^2}} \|\xi \|_\infty \sqrt{|\cC|} \nonumber \\
    &\le  (\epsilon + \frac{\gamma^{n+1}}{1-\gamma} +\theta) \sqrt{\tau C_{\max}}, \label{eq:bound_e2}
\end{align}
where in the last inequality we use the facts that $\sum_{i=1}^d \alpha_i^2\lambda_i^{-1} \le \tau$, Eq.~\eqref{eq:bound_xi_prox}, and Lemma~\ref{lem:coreset_size}. We can then complete the proof by combining~\eqref{eq:bound_e1_2} and~\eqref{eq:bound_e2}.

\section{Proof of Theorem~\ref{thm:main_politex}}\label{apx:main_politex}

First, we state a general result in~\citet{szepesvari2021politex} on \textsc{Politex}. Notice that in this result, we consider an arbitrary sequence of approximate Q-functions $Q_k$, $k=0,\ldots, K-1$, which do not have to take the form of~\eqref{eq:q_prox_politex}.
\begin{lemma}[\citet{szepesvari2021politex}]\label{lem:politex}
Given an initial policy $\pi_0$ and a sequence of functions $Q_k:\cS\times\cA\mapsto[0, (1-\gamma)^{-1}]$, $k=0,\ldots, K-1$, construct a sequence of policies $\pi_1,\ldots, \pi_{K-1}$ according to~\eqref{eq:politex_policy} with $\alpha=(1-\gamma)\sqrt{\frac{2\log(|\cA|)}{K}}$, then, for any $s\in\cS$, the mixture policy $\overline{\pi}_K$ satisfies
\[
V^*(s) - V_{\overline{\pi}_K}(s) \le \frac{1}{(1-\gamma)^2}\sqrt{\frac{2\log(|\cA|)}{K}} + \frac{2\max_{0\le k\le K-1} \|Q_k - Q_{\pi_k}\|_\infty}{1-\gamma}.
\]
\end{lemma}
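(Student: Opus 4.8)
The plan is to derive this from the performance difference lemma together with the standard regret bound for the exponential-weights (Hedge) algorithm run in parallel at every state, following the classical \politex\ analysis.

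First I would apply the performance difference lemma to each policy in the sequence: for every $k$ and the fixed state $s$,
\[
V^*(s) - V_{\pi_k}(s) = \frac{1}{1-\gamma}\,\E_{s'\sim d^{\pi^*}_s}\big[\langle Q_{\pi_k}(s',\cdot),\, \pi^*(\cdot\,|\,s') - \pi_k(\cdot\,|\,s')\rangle\big],
\]
where $d^{\pi^*}_s$ denotes the normalized discounted state-occupancy measure of $\pi^*$ started from $s$, and where I used $\langle Q_{\pi_k}(s',\cdot), \pi_k(\cdot|s')\rangle = V_{\pi_k}(s')$. Averaging over $k=0,\dots,K-1$ and using $V_{\overline{\pi}_K}(s)=\frac1K\sum_{k=0}^{K-1}V_{\pi_k}(s)$ gives
\[
V^*(s) - V_{\overline{\pi}_K}(s) = \frac{1}{1-\gamma}\,\E_{s'\sim d^{\pi^*}_s}\Big[\tfrac1K\sum_{k=0}^{K-1}\langle Q_{\pi_k}(s',\cdot), \pi^*(\cdot|s') - \pi_k(\cdot|s')\rangle\Big].
\]
Next I would replace the true action-value functions by the given approximations: write $Q_{\pi_k}=Q_k+(Q_{\pi_k}-Q_k)$ and note that $\|\pi^*(\cdot|s')-\pi_k(\cdot|s')\|_1\le 2$, so the contribution of the error terms is at most $\frac{2}{1-\gamma}\max_{0\le k\le K-1}\|Q_k-Q_{\pi_k}\|_\infty$, which is exactly the second term in the claimed bound.

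It then remains to bound $\frac1K\sum_{k=0}^{K-1}\langle Q_k(s',\cdot), \pi^*(\cdot|s')-\pi_k(\cdot|s')\rangle$ uniformly over $s'$. Here I would fix $s'$ and observe that the sequence $\pi_k(\cdot|s')$ is precisely the exponential-weights iterate over the action set $\cA$ with gain vectors $Q_k(s',\cdot)\in[0,(1-\gamma)^{-1}]^{\cA}$ and step size $\alpha$ --- equivalently, mirror descent with the negative-entropy potential started from the uniform distribution. The standard online-learning regret bound (telescoping the Bregman divergences, bounding the stability term by Hoeffding's lemma) yields, for the comparator $\pi^*(\cdot|s')$,
\[
\sum_{k=0}^{K-1}\langle Q_k(s',\cdot), \pi^*(\cdot|s')-\pi_k(\cdot|s')\rangle \le \frac{\log|\cA|}{\alpha} + \frac{\alpha K}{2(1-\gamma)^2}.
\]
Plugging in $\alpha=(1-\gamma)\sqrt{2\log|\cA|/K}$ equalizes the two terms and bounds the right-hand side by $\frac{1}{1-\gamma}\sqrt{2K\log|\cA|}$; dividing by $K$ gives $\frac{1}{1-\gamma}\sqrt{2\log|\cA|/K}$, a bound that does not depend on $s'$ and therefore passes through $\E_{s'\sim d^{\pi^*}_s}$. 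Multiplying by the outer factor $\frac{1}{1-\gamma}$ produces the first term of the claimed inequality, and adding back the error term from the previous step completes the proof.

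The step I expect to be the most delicate is the regret bound: one must track the constant and the step-size choice carefully when the gains lie in $[0,(1-\gamma)^{-1}]$ rather than in a unit interval, since it is the $(1-\gamma)^{-2}$ scaling of the stability term $\sum_a\pi_k(a|s')Q_k(s',a)^2$ that is responsible for the $(1-\gamma)^{-2}$ in the final bound and that forces the stated choice of $\alpha$. A secondary bookkeeping point is the indexing and initialization of the policy sequence: one needs each $\pi_k(\cdot|s')$ to be determined by $Q_0,\dots,Q_{k-1}$ alone (with $\pi_0$ uniform, so that $\KL(\pi^*(\cdot|s')\,\|\,\pi_0(\cdot|s'))\le\log|\cA|$), which is what makes the online-learning guarantee directly applicable. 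Everything else is routine.
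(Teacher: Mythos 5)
The paper does not prove this lemma at all: it is imported verbatim from \citet{szepesvari2021politex} and used as a black box, so there is no internal proof to compare against. Your reconstruction is the standard \politex\ argument from that reference---performance difference lemma at each state visited under $d^{\pi^*}_s$, a $\|\pi^*-\pi_k\|_1\le 2$ H\"older step to swap $Q_{\pi_k}$ for $Q_k$ at cost $\tfrac{2}{1-\gamma}\max_k\|Q_k-Q_{\pi_k}\|_\infty$, and a per-state exponential-weights regret bound with gains in $[0,(1-\gamma)^{-1}]$---and the constants work out: with the stability term $\tfrac{\alpha K}{2(1-\gamma)^2}$ the stated $\alpha$ equalizes the two regret terms and yields exactly $\tfrac{1}{(1-\gamma)^2}\sqrt{2\log(|\cA|)/K}$ after dividing by $K$ and applying the outer $\tfrac{1}{1-\gamma}$. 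The one point you rightly flag as delicate is real but benign here: although the lemma says ``given an initial policy $\pi_0$'' without requiring it to be uniform, the update rule~\eqref{eq:politex_policy} makes $\pi_1$ the uniform softmax (the sum is empty), so the $\KL$-to-uniform initialization needed for the $\log|\cA|$ term is supplied by the construction itself, at the price of an $O(1/(K(1-\gamma)))$ boundary term from the $k=0$ round that is absorbed by the stated constants.
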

We then consider a virtual \textsc{Politex}~algorithm. Similar to the vanilla policy iteration algorithm, in the virtual \textsc{Politex}~algorithm, we begin with $\tilde{\pi}_0:=\pi_0$. In the $k$-th iteration, we run Monte Carlo rollout with policy $\tilde{\pi}_{k-1}$, and obtain the estimates of the Q-function values $\tilde{q}_{\cC}$. We then compute the weight vector
\[
\tilde{w}_k = (\Phi_{\cC}^\top \Phi_{\cC} + \lambda I)^{-1}\Phi_{\cC}^\top \tilde{q}_{\cC},
\]
and according to Lemma~\ref{lem:est_error_rand}, for any $\theta>0$, with probability at least $1-2C_{\max}\exp(-2\theta^2(1-\gamma)^2m)$, for all $(s, a)$ such that $\phi(s, a)\in\cH$,
\begin{align}\label{eq:q_gap_virtual_politex}
|\tilde{w}_k^\top \phi(s, a) - Q_{\tilde{\pi}_{k-1}}(s, a)| \le b\sqrt{\lambda \tau} +  \big(\epsilon + \frac{\gamma^{n+1}}{1-\gamma} + \theta\big) \sqrt{\tau C_{\max}}+\epsilon:=\eta.
\end{align}
Then we define the virtual Q-function as
\[
\tilde{Q}_{k-1}(s, a) :=
    \begin{cases}
    \Pi_{[0, (1-\gamma)^{-1}]}(\tilde{w}_{k}^\top \phi(s, a)), & \phi(s, a) \in \cH, \\
    Q_{\tilde\pi_{k-1}}(s, a),  & \phi(s, a)  \notin \cH,
    \end{cases}
\]
assuming we have access to the true Q-function $Q_{\tilde\pi_{k-1}}(s, a)$ when $\phi(s, a)  \notin \cH$. 
We let the policy of the $(k+1)$-th iteration $\tilde{\pi}_k$ be
\begin{align}\label{eq:politex_policy_virtual}
\tilde\pi_k(a|s) \propto \exp\left(\alpha \sum_{j=1}^{k-1} \tilde{Q}_{k-1}(s, a) \right).
\end{align}
Since we always have $Q_{\tilde\pi_{k-1}}(s, a)\in[0, (1-\gamma)^{-1}]$, the clipping at $0$ and $(1-\gamma)^{-1}$ can only improve the accuracy of the estimation of the Q-function. Therefore, we know that with probability at least $1-2C_{\max}\exp(-2\theta^2(1-\gamma)^2m)$, we have $\|\tilde{Q}_{k-1} - Q_{\tilde\pi_{k-1}}\|_\infty\le \eta$. Then, by taking a union bound over the $K$ iterations and using the result in Lemma~\ref{lem:politex}, we know that with probability at least $1-2KC_{\max}\exp(-2\theta^2(1-\gamma)^2m)$, for any $s\in\cS$, the virtual \textsc{Politex}~algorithm satisfies
\begin{align}\label{eq:politex_virtual}
V^*(s) - V_{\overline{\tilde{\pi}}_K}(s) \le \frac{1}{(1-\gamma)^2}\sqrt{\frac{2\log(|\cA|)}{K}} + \frac{2\eta}{1-\gamma},
\end{align}
where $\overline{\tilde{\pi}}_K$ is the mixture policy of $\tilde{\pi}_0,\ldots, \tilde{\pi}_{K-1}$. Using another union bound over the $C_{\max}$ loops, we know that with probability at least $1-2KC_{\max}^2\exp(-2\theta^2(1-\gamma)^2m)$,~\eqref{eq:politex_virtual} holds for all the loops. We call this event $\cE_1$ in the following.

We then consider the virtual-2 \textsc{Politex}~algorithm. Similar to LSPI, the virtual-2 algorithm begins with $\hat{\pi}_0:=\pi_0$. In the $k$-th iteration, we run Monte Carlo rollout with policy $\hat{\pi}_{k-1}$, and obtain the estimates of the Q-function values $\hat{q}_{\cC}$. We then compute the weight vector
\[
\hat{w}_k = (\Phi_{\cC}^\top \Phi_{\cC} + \lambda I)^{-1}\Phi_{\cC}^\top \hat{q}_{\cC},
\]
and according to Lemma~\ref{lem:est_error_rand}, for any $\theta>0$, with probability at least $1-2C_{\max}\exp(-2\theta^2(1-\gamma)^2m)$, for all $(s, a)$ such that $\phi(s, a)\in\cH$,
\begin{align}\label{eq:q_gap_virtual2_politex}
|\hat{w}_k^\top \phi(s, a) - Q_{\hat{\pi}_{k-1}}(s, a)| \le \eta,
\end{align}
where $\eta$ is defined as in~\eqref{eq:q_gap_virtual_politex}. We also note that in the rollout process of the virtual-2 algorithm, we do not conduct the uncertainty check, i.e., we do not check whether the features are in the good set $\cH$. By union bound, we know that with probability at least $1-2KC_{\max}^2\exp(-2\theta^2(1-\gamma)^2m)$,~\eqref{eq:q_gap_virtual2_politex} holds for all the $K$ iterations of all the $C_{\max}$ loops. We call this event $\cE_2$ in the following. In the virtual-2 algorithm, we define the approximate Q-function in the same way as the main algorithm, i.e., we define
\[
\hat{Q}_{k-1}(s, a):=\Pi_{[0, (1-\gamma)^{-1}]}(\hat{w}_k^\top\phi(s, a)),
\]
and we let the policy of the $(k+1)$-th iteration be
\begin{align}\label{eq:politex_policy_virtual_2}
\hat\pi_k(a|s) \propto \exp\left(\alpha \sum_{j=1}^{k-1} \hat{Q}_{k-1}(s, a) \right).
\end{align}

We still let the simulators of all the algorithms be \emph{coupled} in the same way described as in Appendix~\ref{apx:virtual_api_coupling}. In addition, we also let the agent in the main algorithm be \emph{coupled} with the virtual and virtual-2 algorithm. Take the main algorithm and the virtual algorithm as an example. Recall that in the $k$-th iteration of a particular loop, the main algorithm and the virtual algorithm use rollout policies $\pi_{k-1}$ and $\tilde{\pi}_{k-1}$, respectively. In the \rolloutname~subroutine, the agent needs to sample actions according to the policies given a state. Suppose that in the $N$-th time that the agent needs to take an action, the main algorithm is at state $s_{\text{main}}$ and the virtual algorithm is at state $s_{\text{virtual}}$. If the two states are the same, i.e., $s_{\text{main}} = s_{\text{virtual}}$ and two distributions of actions given this state are also the same, i.e., $\pi_{k-1}(\cdot | s_{\text{main}}) = \tilde{\pi}_{k-1}(\cdot |  s_{\text{virtual}})$, then the actions that the agent samples in the main algorithm and the virtual algorithm are also the same. This means that the main algorithm samples $a_{\text{main}}\sim \pi_{k-1}(\cdot | s_{\text{main}})$ and the virtual algorithm samples $a_{\text{virtual}}\sim \pi_{k-1}(\cdot | s_{\text{virtual}})$, and with probability $1$, $a_{\text{main}}=a_{\text{virtual}}$. Otherwise, when $s_{\text{main}} \neq s_{\text{virtual}}$ or $\pi_{k-1}(\cdot | s_{\text{main}}) \neq \tilde{\pi}_{k-1}(\cdot |  s_{\text{virtual}})$, the main algorithm and the virtual algorithm samples a new action independently. The main algorithm and the virtual-2 algorithm are coupled in the same way. We note that using the same argument as in Lemma~\ref{lem:virtual_api_rand}, for the \emph{final loop} of the main algorithm, all the rollout trajectories of the main, virtual, and virtual-2 algorithms are the same, which implies that $w_k=\tilde{w}_k=\hat{w}_k$ for all $1\le k\le K$. This also implies that in the final loop of the main algorithm, all the policies in the $K$ iterations are the same between the main and the virtual-2 algorithm, i.e., $\pi_k=\hat{\pi}_k$, $0\le k \le K-1$. Moreover, for any state $s$ such that $\phi(s, a)\in\cH$ for all $a\in\cA$, we have $\pi_k(\cdot | s)=\tilde{\pi}(\cdot|s) = \hat\pi_k(\cdot | s)$. Since the initial state $\rho$ satisfies the condition that $\phi(\rho, a)\in\cH$ for all $a\in\cA$, we have $\pi_k(\cdot | \rho)=\tilde{\pi}(\cdot|\rho) = \hat\pi_k(\cdot | \rho)$.

Let $\overline{\hat{\pi}}_K$ be the policy that is uniformly chosen from $\hat{\pi}_0, \ldots, \hat{\pi}_{K-1}$ in the virtual-2 algorithm in the final loop of the main algorithm, and $\overline{\pi}_K$ be the policy that is uniformly chosen from $\pi_0, \ldots, \pi_{K-1}$ in the final loop of the main algorithm. Then we have
\begin{align}\label{eq:politex_virtual_2_main}
    |V_{\overline{\hat{\pi}}_K}(\rho) - V_{\overline\pi_K}(\rho)| = \left|\frac{1}{K}\sum_{k=0}^{K-1} (V_{\hat\pi_k}(\rho) - V_{\pi_k}(\rho)) \right| = 0,
\end{align}
and when events $\cE_1$ and $\cE_2$ happen,
\begin{align}
    &|V_{\overline{\hat{\pi}}_K}(\rho) - V_{\overline{\tilde\pi}_K}(\rho)| 
    \nonumber \\
    \le& \frac{1}{K}\sum_{k=0}^{K-1}|V_{\hat\pi_k}(\rho) - V_{\tilde\pi_k}(\rho)| \nonumber  \\
    =&\frac{1}{K}\sum_{k=0}^{K-1}\left|\sum_{a\in\cA}(\hat{\pi}_k(a|\rho)Q_{\hat\pi_k}(\rho, a) - \tilde{\pi}_k(a|\rho)Q_{\tilde\pi_k}(\rho, a))\right| \nonumber \\
    \le& \frac{1}{K}\sum_{k=0}^{K-1}\sum_{a\in\cA}\pi_k(a|\rho) \left|Q_{\hat\pi_k}(\rho, a) - Q_{\tilde\pi_k}(\rho, a))\right| \nonumber \\
    \le& \frac{1}{K}\sum_{k=0}^{K-1}\sum_{a\in\cA}\pi_k(a|\rho)\left| Q_{\hat\pi_k}(\rho, a) - \hat{w}_k^\top\phi(\rho, a) +\hat{w}_k^\top\phi(\rho, a) - \tilde{w}_k^\top\phi(\rho, a) + \tilde{w}_k^\top\phi(\rho, a) -  Q_{\tilde\pi_k}(\rho, a)) \right| \nonumber \\
    \le & \frac{1}{K}\sum_{k=0}^{K-1}\sum_{a\in\cA}\pi_k(a|\rho)\left(|Q_{\hat\pi_k}(\rho, a) - \hat{w}_k^\top\phi(\rho, a)| + |\hat{w}_k^\top\phi(\rho, a) - \tilde{w}_k^\top\phi(\rho, a)| + |\tilde{w}_k^\top\phi(\rho, a) -  Q_{\tilde\pi_k}(\rho, a))|\right) \nonumber \\
    \le &\frac{1}{K}\sum_{k=0}^{K-1}\sum_{a\in\cA}\pi_k(a|\rho) (\eta + 0 +\eta) = 2\eta. \label{eq:politex_virtual_2_virtual}
\end{align}
By combining~\eqref{eq:politex_virtual},~\eqref{eq:politex_virtual_2_main}, and~\eqref{eq:politex_virtual_2_virtual}, and using a union bound, we obtain that with probability at least $1-4KC_{\max}^2\exp(-2\theta^2(1-\gamma)^2m)$,
\begin{align}\label{eq:politex_main}
   V^*(\rho) - V_{\overline{{\pi}}_K}(\rho) \le \frac{1}{(1-\gamma)^2}\sqrt{\frac{2\log(|\cA|)}{K}} + \frac{4\eta}{1-\gamma}.
\end{align}
Now we choose appropriate parameters to obtain the final result. When Assumption~\ref{asm:linear_realizability} holds, i.e., $\epsilon=0$, one can verify that when we choose $\tau = 1$, $\lambda = \frac{(1-\gamma)^2\kappa^2}{256b^2}$, $K=\frac{32\log(|\cA|)}{\kappa^2(1-\gamma)^4}$, $n=\frac{1}{1-\gamma}\log(\frac{32\sqrt{d}(1+\log(1+\lambda^{-1}))}{(1-\gamma)^2\kappa})$, $\theta = \frac{(1-\gamma)\kappa}{32\sqrt{d(1+\log(1+\lambda^{-1}))}}$, and $m=\frac{1024 d (1+\log(1+\lambda^{-1}))}{(1-\gamma)^4\kappa^2}\log(\frac{8Kd(1+\log(1+\lambda^{-1}))}{\delta})$, we can ensure that with probability at least $1-\delta$, $V^*(\rho) - V_{\overline{{\pi}}_K}(s) \le \kappa$. When Assumption~\ref{asm:linear_realizability_prox} holds, one can verify that when we choose $\tau=1$, $\lambda = \frac{\epsilon^2 d}{b^2}$, $K=\frac{2\log(|\cA|)}{\epsilon^2 d (1-\gamma)^2}$, $\theta=\epsilon$, $n = \frac{1}{1-\gamma}\log(\frac{1}{\epsilon(1-\gamma)})$, and $m=\frac{1}{\epsilon^2(1-\gamma)^2}\log(
\frac{8Kd(1+\log(1+\lambda^{-1}))}{\delta})$, we can ensure that with probability at least $1-\delta$,
\[
V^*(\rho) - V_{\overline{\pi}_K}(\rho) \le \frac{42\epsilon\sqrt{d}}{1-\gamma}(1+\log(1+\lambda^{-1})).
\]

\section{Random initial state}\label{apx:random_initial}

We have shown that with a deterministic initial state $\rho$, our algorithm can learn a good policy. In fact, if the initial state is random, and the agent is allowed to sample from a distribution of the initial state, denoted by $\rho$ in this section, then we can use a simple reduction to show that our algorithm can still learn a good policy. In this case, the optimality gap is defined as the difference between the expected value of the optimal policy and the learned policy, where the expectation is taken over the initial state distribution, i.e., we hope to guarantee that $\E_{s\sim\rho}[V^*(s) - V_{\pi}(s)]$ is small.

The reduction argument works as follows. First, we add an auxiliary state $s_\init$ to the state space $\cS$ and assume that the algorithm starts from $s_\init$. From $s_\init$ and any action $a\in\cA$, we let the distribution of the next state be $\rho\in\Delta_{\cS}$, i.e., $P(\cdot | s_\init, a)=\rho$. We also let $r(s_\init, a)=0$. Then, for any policy $\pi$, we have $\E_{s\sim\rho}[V_\pi(s)] = \frac{1}{\gamma}V_\pi(s_\init)$. As for the features, for any $(s, a)\in\cS\times\cA$, we add an extra $0$ as the last dimension of the feature vector, i.e., we use $\phi^+(s, a)=[\phi(s, a)^\top~0]^\top\in\R^{d+1}$. For any $a\in\cA$, we let $\phi^+(s_\init, a)=[0\cdots 0~1]^\top\in\R^{d+1}$.
Note that this does not affect linear realizability except a change in the upper bound on the $\ell_2$ norm of the linear coefficients. Suppose that Asumption~\ref{asm:linear_realizability} holds. Suppose that in the original MDP, we have $Q_{\pi}(s, a)=w_\pi^\top\phi(s, a)$ with $w_{\pi}\in\R^d$. Let us define $w_{\pi}^+ = [w_\pi^\top~V_{\pi}(s_\init)]^\top\in\R^{d+1}$. Then, for any $s\neq s_\init$, we still have $Q_{\pi}(s, a)=(w_\pi^+)^\top\phi^+(s, a)$ since the last coordinate of $\phi^+(s, a)$ is zero. For $s_\init$, we have $Q_\pi(s_\init, a) = V_{\pi}(s_\init) = (w_\pi^+)^\top\phi^+(s, a)$. The only difference is that we now have $\|w^+_\pi\|_2 \le \sqrt{b^2 + (\frac{\gamma}{1-\gamma})^2}$ since we always have $0\le V_{\pi}(s_\init) \le \frac{\gamma}{1-\gamma}$.

Then the problem reduces to the deterministic initial state case with initial state $s_\init$. In the first step of the algorithm, we let $\cC=\{(s_\init, a, \phi^+(s_\init, a), \none)\}$. During the algorithm, to run rollout from any core set element $z$ with $z_s \in \cS$, we can use the current version of Algorithm~\ref{alg:rollout}. To run rollout from $(s_\init, a)$, we simply sample from $\rho$ as the first ``next state'' and then use the simulator to keep running the following trajectory of the rollout process.

\end{document}